\tikzset{roads/.style={line width=0.2cm}}
\title{Fast gradient descent for drifting least squares regression, with application to bandits}
\author[1]{Nathaniel Korda \thanks{nathaniel.korda@eng.ox.ac.uk}}
\author[2]{Prashanth L A \thanks{prashanth.la@inria.fr}}
\author[2]{R\'emi Munos \thanks{remi.munos@inria.fr}}
\affil[1]{\small Oxford University, UNITED KINGDOM.}
\affil[2]{\small INRIA Lille - Nord Europe, Team SequeL, FRANCE.}
\date{}
\begin{document}
\maketitle

%%%%%%%%%%%%%%%%%%%%%%%%%%%%%%%%%%%%%%%%%%%%%%%%%%%%%%%%%%%%%%
%%%%%%%%%%%%%%%%%%%%%%%%%%%%%%%%%%%%%%%%%%%%%%%%%%%%%%%%%%%%%%
\begin{abstract}
 Online learning algorithms require to often recompute least squares regression estimates of parameters. We study improving the computational complexity of such algorithms by using stochastic gradient descent (SGD) type schemes in place of classic regression solvers. We show that SGD schemes efficiently track the true solutions of the regression problems, even in the presence of a drift.
This finding coupled with an $O(d)$ improvement in complexity, where $d$ is the dimension of the data, make them attractive for implementation in the \textit{big data} settings.
In the case when strong convexity in the regression problem is guaranteed, we provide bounds on the error both in expectation and high probability (the latter is often needed to provide theoretical guarantees for higher level algorithms), despite the drifting least squares solution.
As an example of this case we prove that the regret performance of an SGD version of the PEGE linear bandit algorithm is worse than that of PEGE itself only by a factor of $O(\log^4 n)$.  
When strong convexity of the regression problem cannot be guaranteed, we investigate using an adaptive regularisation.
We make an empirical study of an adaptively regularised, SGD version of LinUCB in a news article recommendation application, which uses the large scale news recommendation dataset from Yahoo! front page.
These experiments show a large gain in computational complexity and a consistently low tracking error.
\end{abstract}

%%%%%%%%%%%%%%%%%%%%%%%%%%%%%%%%%%%%%%%%%%%%%%%%%%%%%%%%%%%%%%
%%%%%%%%%%%%%%%%%%%%%%%%%%%%%%%%%%%%%%%%%%%%%%%%%%%%%%%%%%%%%%
\section{Introduction}
\noindent Often in learning algorithms an unknown parameter must be estimated from data arriving sequentially in pairs, $(x_n,y_n)$.
We consider  settings where the points $x_n$ are chosen by a higher level algorithm and the outputs $y_n$ satisfy the dynamics $y_n = x_n\tr \theta^* + \xi_n$,
where $\xi_n$ is i.i.d., zero-mean noise, and $\theta^*$ is the unknown parameter
(the flow diagram, Fig. \ref{fig:ttalgo}, illustrates this setting).
Typically, in such cases an ordinary least squares (OLS) estimate is used for $\theta^*$, and finding this estimate is often the most computationally intensive part of the higher level algorithm.
The solution to the least squares regression problem is defined as
\begin{align}\label{eq:ols}
 \hat \theta_n = \argmin_{\theta} \left\{F_n(\theta):= \dfrac{1}{2} \sum\limits_{i=1}^{n} (y_i - \theta\tr x_i)^2\right\}.
\end{align}
That $\hat \theta_n =\bar  A^{-1}_n \bar b_n$, where 
$\bar A_n =  n^{-1} \sum_{i=1}^{n} x_i x_i\tr$
and $\bar b_n =  n^{-1} \sum_{i=1}^{n} x_i y_i$,
is well-known.
Assuming that the features $x_i$ evolve in a compact subset $\cD$ of $\R^d$, the complexity of solving \eqref{eq:ols} with the above approach is $O(d^2)$, where the inverse of $\bar A_n$ is computed iteratively using the Sherman-Morrison lemma. Using the Strassen algorithm or the Coppersmith-Winograd algorithm gives a complexity of $O(d^{2.807})$ and $O(d^{2.375})$ respectively. In addition, there is an order $O(d^2 n)$ complexity for computing $\bar A_n$.

Unlike the traditional gradient descent (GD) setting where the pairs $(x_n,y_n)$ are samples drawn from some unknown joint probability distribution, we assume that the samples, $x_n$, are chosen by a higher level learning algorithm, and the problem is to find a good enough approximation to $\theta^*$ for its purposes, given these non-i.i.d. samples.
This poses a new difficulty in applying GD schemes directly, and we outline two well-known solutions to this problem in the following.

As illustrated in Fig. \ref{fig:ttalgo}, the classic SGD algorithm  operates by maintaining an iterate $\theta_n$ that is updated as follows: Choose a random sample $(x_{i_n},y_{i_n})$, where $i_n$ is picked uniformly at random in $\{1,\dots,n\}$ and update
\begin{align}
 \label{eq:fOLS-GD-update}
\theta_n = \theta_{n-1} + \gamma_n (y_{i_n} - \theta_{n-1}\tr x_{i_n})x_{i_n},
\end{align}
(The sequence of stepsizes $\gamma_n$ is chosen in advance, see assumption (A4) below for details.)
The complexity of each iteration above is $O(d)$, while traditional approaches giving the exact solution, such as using the Sherman-Morrison lemma, incur a cost of at least $O(d^2)$ per iteration.
We shall refer to SGD applied to our setting as fOLS-GD (fast Online Least Squares - Gradient Descent).

Unlike previous works which analyse the above SGD algorithm in a batch setting, we consider a drifting least squares setting.
In particular, at each instant $n$, the SGD update is required to track the minimiser $\hat\theta_n$ of the function $F_n(\cdot)$, as $n$ increases.
The practical advantage of such an approach is to replace the costly inversion of the $\bar A_n$ matrix with an efficient iterative scheme.
However, from a theoretical standpoint, fOLS-GD has to grapple with the drift error, $\| \hat\theta_n - \hat\theta_{n-1}\|_2$, that accumulates with time.

Under a minimum eigenvalue assumption on the matrices $\bar A_n$, we find that ordinary SGD is sufficient to mitigate the effects of drift in $\hat\theta_n$.
In this case, we provide bounds both in expectation and in high probability on the approximation error $\theta_n - \hat\theta_n$, where $\theta_n$ is the fOLS-GD iterate at instant $n$ (see Theorem \ref{cor:fOLS-GD}). Such bounds are essential for giving theoretical guarantees when using fOLS-GD as a subroutine to replace the matrix inversion approach to the regression problem in a higher level learning algorithm. 

To cope with situations where the minimum eigenvalue assumption of the $\bar A_n$ matrix cannot be guaranteed by the higher level algorithm we propose adding an adaptive regularisation: since our data is growing with time we introduce a regularisation parameter, $\lambda_n$, that adapts to the sample size $n$ as follows:
\begin{align}
\label{eq:reg-ls}
\ttheta_n: =  \arg\min_\theta \frac{1}{2n} \sum_{i=1}^n (y_i - \theta\tr x_i)^2 +\lambda_n \l \theta \r^2.
\end{align}
This algorithm, which we henceforth refer to as fRLS-GD (fast Regularised online Least Squares - Gradient Descent), tracks the regression solutions, $\ttheta_n$ and operates in a manner similar to fOLS-GD (see Fig. \ref{fig:ttalgo}) except that we factor in the regularisation parameter $\lambda_n$ into the update rule:
\begin{align}
 \label{eq:fRLS-GD-update}
 \theta_n = \theta_{n-1} + \gamma_n( (y_{i_n} - \theta_{n-1}\tr x_{i_n})x_{i_n}- \lambda_n\theta_{n-1}).
\end{align}
Unlike fOLS-GD, the above algorithm will suffer a bias due to the adaptive regularisation and it is difficult to provide bounds in theory owing to the bias error (see discussion after Eq. \eqref{eq:reg-decomp}). However, we demonstrate empirically that fRLS-GD is able to consistently track the true RLS solutions, when used within a higher level algorithm. The advantage, however, of using fRLS-GD in place of classic RLS solvers is that it results in significant computational gains.

As examples of higher level learning algorithms using regression as a subroutine, we consider two linear bandit algorithms.
% and the ConfidenceBall algorithm of \cite{dani2008stochastic}.
In a linear bandit problem the values $x_n$ represent actions taken by an agent and the values $y_n = x_n\tr\theta^* + \xi_n$ are interpreted as random rewards, with unknown parameter $\theta^*$. At each time the agent can choose to take any action $x\in\cD$, where $\cD$ is some compact subset of $\R^d$, and the agent's goal is to maximise the expected sum of rewards.
This goal would be achieved by choosing $x_n = x^* :=\argmin_x\{x\tr \theta^*\}$, $\forall n$.
However, since one does not know $\theta^*$ one needs to estimate it, and a tradeoff appears between sampling pairs $(x_n,y_n)$ that will improve the estimate, and gaining the best short term rewards possible by exploiting the current information available.
Typically the performance of a bandit algorithm is measured by its \emph{expected cumulative regret}: $\cR_n = \sum_{i=1}^n (x^* - x_i)\tr \theta^*$.

First, we consider the PEGE algorithm for linear bandits proposed by \cite{Tsitsiklis2010}.
This algorithm is designed for action sets $\cD$ satisfying a strong convexity property (see assumption (A4)),
and so we can provide a computationally efficient variant of PEGE where the fOLS-GD iterate, $\theta_n$, is used in place of the OLS estimate, $\hat\theta_n$, in each iteration $n$ of PEGE.
PEGE splits time into exploration and exploitation phases.
During the exploitation phases the algorithm acts greedily using OLS estimates of $\theta^*$ calculated from data gathered during the exploration phases.
During the exploration phases data is gathered in such a way that  the smallest eigenvalues of $\bar A_n$ matrices are uniformly bounded for all $n$.
The regret performance of this algorithm is $O(dn^{1/2})$, and we establish that our variant using fOLS-GD as a subroutine achieves an improvement of order $O(d)$ in complexity, while suffering a loss of only $O(\log^4 n)$ in the regret performance.

\tikzstyle{block} = [draw, fill=white, rectangle,
    minimum height=3em, minimum width=6em]
\tikzstyle{sum} = [draw, fill=white, circle, node distance=1cm]
\tikzstyle{input} = [coordinate]
\tikzstyle{output} = [coordinate]
\tikzstyle{pinstyle} = [pin edge={to-,thin,black}]

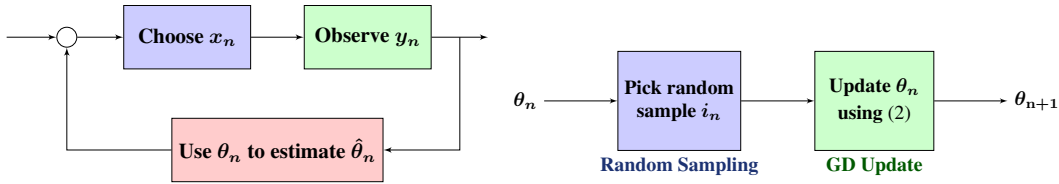
\begin{figure}
\centering
\begin{tabular}{cc}
\scalebox{0.8}{\begin{tikzpicture}[auto, node distance=2cm,>=latex']
    \node [input, name=input] {};
    \node [sum, right of=input] (sum) {};
    \node [block, fill=blue!20,right of=sum] (controller) {\textbf{Choose} $\boldsymbol{x_n}$};
    \node [block, fill=green!20,right of=controller,
            node distance=3cm] (system) {\textbf{Observe $\boldsymbol{y_n}$}};
    % We draw an edge between the controller and system block to
    % calculate the coordinate u. We need it to place the measurement block.
    \draw [->] (controller) -- node[name=u] {} (system);
    \node [output, right of=system] (output) {};
    \node [block, fill=red!20,below of=u] (measurements) { \textbf{Use} {\boldmath$\theta_n$}  \textbf{to estimate} {\boldmath $ \boldsymbol{\hat\theta_n}$}  };
    %
    % Once the nodes are placed, connecting them is easy.
    \draw [draw,->] (input) -- node {} (sum);
    \draw [->] (sum) -- node {} (controller);
    \draw [->] (system) -- node [name=y] {}(output);
    \draw [->] (y) |- (measurements);
    \draw [->] (measurements) -| node[pos=0.99] {}
        node [near end] {} (sum);
\end{tikzpicture}}
&
\tikzstyle{block} = [draw, fill=white, rectangle,
   minimum height=5em, minimum width=6em]
\tikzstyle{sum} = [draw, fill=white, circle, node distance=1cm]
\tikzstyle{input} = [coordinate]
\tikzstyle{output} = [coordinate]
\tikzstyle{pinstyle} = [pin edge={to-,thin,black}]
\hspace{-2em}
\scalebox{0.75}{\begin{tikzpicture}[auto, node distance=2cm,>=latex']
% We start by placing the blocks
\node (theta) {$\boldsymbol{\theta_n}$};
\node [block, fill=blue!20, right=1.3cm of theta,label=below:{\color{bleu2}\bf Random Sampling},align=center] (sample) {\textbf{Pick random}\\[1ex] \textbf{sample }$\boldsymbol{i_n}$}; 
\node [block, fill=green!20, right=1.3cm of sample,label=below:{\color{darkgreen}\bf GD Update},align=center] (update) {\textbf{Update} $\boldsymbol{\theta_n}$ \\[1ex]\textbf{using }\eqref{eq:fOLS-GD-update}};
\node [right=1.3cm of update] (end) {$\boldsymbol{\mathbf{\theta_{n+1}}}$};
\draw [->] (theta) --  (sample);
\draw [->] (sample) -- (update);
\draw [->] (update) -- (end);
\end{tikzpicture}}
\end{tabular}
\caption{Estimating OLS $\hat\theta_n$ using online SGD within a higher-level machine learning algorithm}
\label{fig:ttalgo}
\end{figure}

Second, we consider the LinUCB algorithm proposed  \cite{li2010contextual}.
Here we investigate computationally efficient variants of LinUCB. We begin by replacing the OLS estimate with an fRLS-GD iterate, and then compare this to two other state-of-the-art OLS schemes from \cite{johnson2013accelerating}  and \cite{roux2012stochastic}.
The LinUCB algorithm is designed for situations where at each time, $n$, the agent can choose only from a given, finite subset of $\cD$.
The algorithm then calculates an optimistic upper confidence bound (UCB) for the mean reward associated with each feature, and then selects a feature greedily with respect to this UCB\footnote{Calculating the UCBs is in itself an NP-hard problem for all but simple decision sets. However, we alleviate this problem by considering a setting where the sets of arms at each time instant is a finite subset of $\cD$.}.

LinUCB, however, cannot guarantee that the minimum eigenvalue of $\bar A_n$ matrices is uniformly bounded, and so we apply fRLS-GD in place of fOLS-GD.
Moreover, we devise a simple GD procedure for estimating the confidence term of the UCB for each arm. 
The resulting LinUCB variant achieves an $O(d)$ improvement in complexity over regular LinUCB. From the numerical experiments, we observe that the fRLS-GD iterate as well as SVRG \cite{johnson2013accelerating} and SAG \cite{roux2012stochastic} variants consistently track the true RLS solutions in each iteration of LinUCB, while the runtime gains are significant.

%%%%%%%%%%%%%%%%%%%%%%%%%%%%%%%%%%%%%%%%%%%%%%%%%%%%%%%%%%%%%%
\paragraph{Related work.}
SGD is a popular approach for optimizing a function given noisy observations, while incurring low computational complexity. Non-asymptotic bounds in expectation for SGD schemes have been provided by \cite{bach2011non}. In the machine learning community, several algorithms have been proposed for minimising the regret, for instance, \cite{zinkevich2003online,hazan2011beyond,rakhlin2011making} and these can be converted to find the minimiser of a (usually convex) function. A closely related field is stochastic approximation (SA), and concentration bounds for SA algorithms have been provided by \cite{frikha2012concentration}. Adaptive regularisation in the context of least squares regression has been analysed in \cite{tarres2011online}. For recent algorithmic improvements to solving batch problems, the reader is referred to the works of \cite{roux2012stochastic,shalev2012stochastic,johnson2013accelerating}.  

In general, none of the schemes proposed above are directly applicable in our setting due to two difficulties: \\
\begin{inparaenum}[\bfseries(i)]
\item our data $\{(x_i,y_i)\}_{i=1}^n$ do not arrive from a distribution, but instead are chosen by a higher level algorithm, and\\
\item an efficient online scheme is required to track the solution of a least squares regression problem with a growing data set, and thus a drifting target.\\
\end{inparaenum}
Earlier works solve one batch problem or a sequence of batch problems with data arriving from a distribution. On the other hand, we consider a drifting regression setting and study low complexity SGD schemes. For a strongly convex setting, we are able to provide theoretical guarantees, while for a non-strongly convex setting, we obtain encouraging results empirically.

%%%%%%%%%%%%%%%%%%%%%%%%%%%%%%%%%%%%%%%%%%%%%%%%%%%%%%%%%%%%%%
%%%%%%%%%%%%%%%%%%%%%%%%%%%%%%%%%%%%%%%%%%%%%%%%%%%%%%%%%%%%%%
\section{Gradient Descent for Online Least Squares}
\label{sec:fOLS-GD}
\label{sec:online}
In this section, we present the results for the fOLS-GD procedure outlined earlier. Recall that fOLS-GD tracks the OLS estimate $\hat\theta_n:=\min_\theta \frac{1}{2} \sum_{i=1}^n (y_i - \theta\tr x_i)^2$ as the samples $(x_i,y_i)$ arrive sequentially (see Fig. \ref{fig:ttalgo}) and updates the parameter as follows: Fix $\theta_0$ arbitrarily and update
\begin{align}
 \label{eq:fOLS-GD-update-again}
\theta_n = \theta_{n-1} + \gamma_n (y_{i_n} - \theta_{n-1}\tr x_{i_n})x_{i_n},
\end{align}
where $i_n\sim\cU(\{1,\dots,n\})$. Here $\cU(S)$ denotes the uniform distribution on the set $S$, and so the samples $(x_{i_n},y_{i_n})$ passed to \eqref{eq:fOLS-GD-update-again} are chosen uniformly randomly from the set $\{(x_1,y_1),\ldots,(x_n,y_n)\}$.

\paragraph{Results}
We make the following assumptions:\\
\begin{inparaenum}[\bfseries({A}1)]
\item $\sum_n\gamma_n = \infty$ and $\sum_n\gamma^2_n < \infty$.\\
\item Boundedness of $x_n$, i.e., $\sup_n \l x_n \r \le 1$.\\
\item The noise $\{\xi_n\}$ is i.i.d. and $|\xi_n|\le 1,\forall n$.\\
\item For all $n$ larger than some initial $n_0$, ${\lambda_{\min}(\bar A_n)}\ge \mu$, where $\lambda_{\min}(\cdot)$ denotes the smallest eigenvalue of a matrix.\\
\end{inparaenum}
The first assumption is a standard one for the step sizes of SGD, and, more generally, stochastic approximation schemes.
While the next two assumptions are standard in the context of least squares, the last assumption is made necessary due to the fact that we do not regularise the problem. Initially $A_n$ may not invertible, and hence the condition can only reasonably hold after some initial time $n_0$. 

In the following, we bound the approximation error $\|\theta_n-\hat\theta_n\|$ of fOLS-GD, both in high probability as well as in expectation.
\begin{theorem}
\label{cor:fOLS-GD}
Under (A2)-(A4), with $\gamma_n= c/(4(c+n))$ and $\mu c / 4 \in(2/3, 1)$,  for any $\delta >0$ and $n>n_0$,
\begin{align}
\E \left( \l \theta_n - \hat\theta_n \r \right) \le \frac{K_1(n)}{\sqrt{n+c}},\text{ and }
P\left( \l \theta_n - \hat\theta_n \r  \le \frac{K_2(n)}{\sqrt{n+c}}\right) \ge 1 - \delta \label{eq:l2-prob-bound},
\end{align}
where
\begin{align*}
K_1(n) = \frac{\| \theta_{n_0} - \theta^* \|\ln(n_0)}{(n+c)^{\mu c / 4}} + \sqrt{h(n)} + \frac{\sqrt{2}+\sqrt{\mu\beta_{n+c}}}{\mu},\quad
K_2(n) = \sqrt{2 K_{\mu, c} \log\frac{1}{\delta}}+ K_1(n),
\end{align*}
\begin{align*}
K_{\mu,c}=c^2/\left[16\left(1-2(1-3\mu c/16)\right)\right], \quad
\beta_n = \max\left(128 d \log n \log{n^2 \delta^{-1}}, \left(2\log{n^2 \delta^{-1}}\right)^2 \right),
\end{align*}
and $h(k) = 2\left[1 + 2(\l \theta_0 - \theta^*\r +\log k)^2\right]$.
\end{theorem}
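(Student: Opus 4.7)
Setting $z_n := \theta_n - \hat\theta_n$, the first move is to isolate the deterministic contraction from the randomness. Conditioning \eqref{eq:fOLS-GD-update-again} on the $\sigma$-field $\mathcal{F}_{n-1}$ generated by the entire data $\{(x_i,y_i)\}$ together with the indices $i_1,\ldots,i_{n-1}$, and using $\bar b_n = \bar A_n\hat\theta_n$, the update rewrites as
\begin{equation*}
\theta_n = (I-\gamma_n \bar A_n)\theta_{n-1} + \gamma_n \bar A_n \hat\theta_n + \gamma_n M_n,
\end{equation*}
where $M_n := (y_{i_n}-\theta_{n-1}\tr x_{i_n})x_{i_n} - (\bar b_n - \bar A_n\theta_{n-1})$ is a martingale difference whose norm is uniformly bounded under (A2)-(A3). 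Subtracting $\hat\theta_n$ and introducing $\Delta_n := \hat\theta_n - \hat\theta_{n-1}$ gives the basic error recursion
\begin{equation*}
z_n = (I - \gamma_n \bar A_n)\, z_{n-1} \;-\; (I - \gamma_n \bar A_n)\,\Delta_n \;+\; \gamma_n M_n.
\end{equation*}
Since (A2) forces $\lambda_{\max}(\bar A_n)\le 1$ and (A4) gives $\lambda_{\min}(\bar A_n)\ge \mu$ for $n>n_0$, the operator norm of $I-\gamma_n\bar A_n$ is at most $1-\gamma_n\mu$, yielding a genuine contraction at every step beyond $n_0$.

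Iterating the recursion from $n_0$ separates $z_n$ into three cleanly distinguishable pieces,
\begin{equation*}
z_n \;=\; P_{n_0,n}\, z_{n_0} \;-\; \sum_{k=n_0+1}^{n} P_{k-1,n}\,\Delta_k \;+\; \sum_{k=n_0+1}^{n} \gamma_k P_{k,n}\, M_k,
\end{equation*}
where $P_{k,n}$ denotes the ordered matrix product $\prod_{j=n}^{k+1}(I-\gamma_j\bar A_j)$. With $\gamma_k = c/[4(c+k)]$ a routine exp-log estimate gives $\l P_{k,n}\r \le ((c+k)/(c+n))^{\mu c/4}$, and this exponent drives everything that follows. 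The initial-error piece contributes at most $\l z_{n_0}\r\cdot((c+n_0)/(c+n))^{\mu c/4}$, which matches the $\l\theta_{n_0}-\theta^*\r\ln(n_0)/(n+c)^{\mu c/4}$ entry of $K_1(n)$ once one adds the standard OLS concentration $\l\hat\theta_{n_0}-\theta^*\r = O(\log n_0/\sqrt{n_0})$. The drift piece needs a high-probability bound on $\l\Delta_k\r$: I would expand $\hat\theta_k-\hat\theta_{k-1}$ by a one-step Sherman-Morrison identity and control the residual $\bar A_k^{-1}(x_k\xi_k)/k$ by a self-normalized martingale inequality for $\sum x_i\xi_i$, giving $\l\Delta_k\r = O((\log k)/k)$, and then sum against the weights $\l P_{k-1,n}\r$ to obtain the $\sqrt{h(n)/(n+c)}$ contribution.

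The martingale piece is the most delicate. I would view the weighted sum $S_n := \sum_{k=n_0+1}^{n}\gamma_k P_{k,n} M_k$ as a Hilbert-space martingale in $n$ and apply a Hoeffding/Azuma-type tail bound, for which the key estimate is
\begin{equation*}
\sum_{k=n_0+1}^{n} \gamma_k^{2}\Bigl(\frac{c+k}{c+n}\Bigr)^{\mu c/2} \;\le\; \frac{K_{\mu,c}}{c+n},
\end{equation*}
which is exactly where the hypothesis $\mu c/4 \in (2/3,1)$ is used so that the geometric sum converges with the claimed prefactor $K_{\mu,c}$. In expectation this already delivers a $1/\sqrt{n+c}$ bound; in high probability, inverting the Azuma tail adds the $\sqrt{2K_{\mu,c}\log(1/\delta)/(n+c)}$ term appearing in $K_2(n)$. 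The $\sqrt{\mu\beta_{n+c}}/\mu$ contribution inside $K_1(n)$ then arises from refining the crude Hoeffding for the inner noise $\sum x_i\xi_i$ to a self-normalized matrix-concentration bound, which is what produces the $d\log^{2}n$ factor encoded in $\beta_n$.

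\textbf{Main obstacle.} The hardest step is obtaining a high-probability bound on $\l\Delta_k\r$ sharp enough to be summable: a direct triangle-inequality estimate $\l\hat\theta_k-\theta^*\r+\l\hat\theta_{k-1}-\theta^*\r = O(\sqrt{\log k/k})$, weighted by $((c+k)/(c+n))^{\mu c/4}$, does not yield the claimed $1/\sqrt{n+c}$ scaling. One must exploit the cancellation between $\hat\theta_k$ and $\hat\theta_{k-1}$ through the Sherman-Morrison step sketched above to gain the additional $1/\sqrt{k}$ factor. The second subtlety is keeping the expectation and high-probability bounds on $S_n$ in exactly matching form, so that the union bound over the self-normalized event (controlling the inner noise) and the Azuma event (controlling the outer weighted martingale) delivers $K_2(n)$ with no hidden logarithmic overhead beyond what is already absorbed into $\beta_n$.
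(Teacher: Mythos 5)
Your overall skeleton matches the paper's: the same unrolled decomposition of $z_n=\theta_n-\hat\theta_n$ into initial, drift and sampling pieces, and the same key idea for the drift, namely exploiting the cancellation in $\hat\theta_k-\hat\theta_{k-1}$ (the paper differences the normal equations to get $\hat\theta_{k-1}-\hat\theta_k=\xi_k A_{k-1}^{-1}x_k-(x_k\tr(\hat\theta_k-\theta^*))A_{k-1}^{-1}x_k$, which is the same $1/k$ gain you propose to obtain via a one-step Sherman--Morrison expansion, combined with the Dani et al.\ confidence ball). Where your plan genuinely diverges, and where it has a gap, is the high-probability step. You assert that $M_n:=(y_{i_n}-\theta_{n-1}\tr x_{i_n})x_{i_n}-(\bar b_n-\bar A_n\theta_{n-1})$ has uniformly bounded norm under (A2)--(A3); it does not, because it contains $\theta_{n-1}$, and the iterates are only controlled by $\l\theta_n\r\le\l\theta_0\r+\Gamma_n(\l\theta^*\r+1)=O(\log n)$. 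This is exactly why $h(k)$ in the theorem carries the factor $(\l\theta_0-\theta^*\r+\log k)^2$. Consequently, an Azuma/Pinelis bound applied directly to the vector martingale $S_n=\sum_k\gamma_k P_{k,n}M_k$ needs increment bounds proportional to $\gamma_k\exp(-\mu(\Gamma_n-\Gamma_k))\,\l M_k\r$, which inflates the variance proxy by $\log^2 n$ and cannot deliver the claimed, log-free deviation term $\sqrt{2K_{\mu,c}\log(1/\delta)}$ in $K_2(n)$ without further ideas. The paper avoids this by not concentrating $S_n$ at all: following Frikha--Menozzi, it writes $\l z_n\r-\E\l z_n\r=\sum_i D_i$ with $D_i$ martingale differences in the noise filtration, shows $g_i=\E[\l z_n\r\mid\theta_i]$ is Lipschitz in $\xi_i$ with constant $L_i=\gamma_i\prod_{j>i}(1-2\mu\gamma_j(1-\gamma_j))^{1/2}$ --- crucially independent of the iterate magnitude, since the update is affine in $y$ --- and then applies sub-Gaussian concentration, with $K_{\mu,c}$ coming from $\sum_i L_i^2\le K_{\mu,c}/(n+c)$, valid because $3\mu c/16>1/2\iff\mu c/4>2/3$ (your exponent $\mu c/2$ and the place you invoke the hypothesis are therefore slightly off at the level of constants).

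A related symptom of the same oversight is that you swap two terms of $K_1(n)$: in the paper $\sqrt{h(n)}$ comes from the sampling error (the second moment $\E\l\Delta M_k\r^2\le h(k)$ grows logarithmically precisely because the iterates are unbounded), while the $(\sqrt{2}+\sqrt{\mu\beta_{n+c}})/\mu$ term comes from the drift, via $\l\hat\theta_k-\theta^*\r\le\sqrt{\beta_k/(k\mu)}$. Your drift analysis is otherwise on the right track, but to reconstruct the stated constants you would need either to adopt the Lipschitz-in-the-noise concentration argument, or to carry the $O(\log n)$ growth of the increments explicitly and accept a correspondingly weaker $K_2$.
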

\paragraph{Proof Sketch}
In order to prove the bound in expectation, following the proof scheme of \cite{frikha2012concentration}, we expand the error at time $n$ into an initial error term, a (martingale) sampling error term, and a drift error term as follows:
\begin{align*}
\theta_n - \hat\theta_{n}
	&=  \theta_n - \hat\theta_{n-1} + \hat\theta_{n-1} - \hat\theta_{n}
	= \theta_{n-1} - \hat\theta_{n-1} + \hat\theta_{n-1} - \hat\theta_{n} + \gamma_n (y_{i_n} - \theta_{n-1}\tr x_{i_n})x_{i_n}\\
	&= \underbrace{\frac{\Pi_n}{\Pi_{n_0}} (\theta_{n_0} - \theta^*)}_{\text{Initial Error}}
		+ \sum\limits_{k=1}^{n}\bigg[ \underbrace{\gamma_k \frac{\Pi_n}{\Pi_k} \Delta \tM_k}_{\text{Sampling Error}}
		-\underbrace{\frac{\Pi_n}{\Pi_k} (\hat\theta_{k} - \hat\theta_{k-1})}_{\text{Drift Error}}\bigg],
\end{align*}
where $\Pi_n:=\prod_{k=1}^{n}\left(I - \gamma_k \bar A_k\right)$, and $\Delta \tM_k$ is a martingale difference (see the Appendix \ref{sec:analysis} below for details).

The initial and sampling errors appear as in previous works on SGD (cf. \cite{frikha2012concentration} and  \cite{bach2011non}), and can be treated similarly, except that here we can make all the constants explicit, using the specific form of the update rule, and also that $\l \Pi_n \Pi_k^{-1} \r \le \e (\Gamma_n - \Gamma_k)$, where $\Gamma_n: = \sum_{i = 1}^n \gamma_i$.
In this way, choosing the step sequence as in the Theorem statement, we derive the first and second terms of $K_1(n)$.

The drift error, however, is not present in previous works, and comes from the fact that the target of the algorithm, $\hat\theta_n$, is drifting over time. To control it we note that
\begin{align*}
&\left( \nabla F_n(\hat\theta_n) = 0 = \nabla F_{n-1}(\hat\theta_{n-1})  \right)\implies\left( \hat\theta_{n-1} - \hat\theta_n
=\left(\xi_n A_{n-1}^{-1}  - (x_n\tr(\hat\theta_n -\theta^*))A_{n-1}^{-1}\right)x_n\right).
\end{align*}
Thus it is controlled by the convergence of the least squares solution $\hat\theta_n$ to $\theta^*$. Adapting a confidence ball result from \cite{dani2008stochastic}, we derive the third term of $K_1$.

Having bounded the mean error, we can bound separately the deviation of the error from its mean. To do this, following \cite{frikha2012concentration}, we decompose $\| \theta_n - \hat\theta_{n} \|^2 - E \| \theta_n - \hat\theta_{n} \|^2$ into a sum of martingale differences as follows: Let $\cH_n$ denoting the sigma-field $\sigma(i_1,\ldots,i_n)$.
\begin{align}
 \| \theta_n - \hat\theta_{n} \|_2 - \E \| \theta_n - \hat\theta_{n} \|_2 = & \sum\limits_{i=1}^{n} g_i - \E [ g_i \left| \cH_{i-1}\right.], 
\end{align}
where $g_i = \E [\| \theta_n - \hat\theta_{n}\|_2 \left| \theta_i \right.]$.
Next, we establish that the functions $g_i$ are Lipschitz continuous in the noise $\xi_i$, with Lipschitz constants $L_i$. Unlike in \cite{frikha2012concentration} we use the exact form of the update to derive the exact constants $L_i$. The final step of the proof is to invoke a standard martingale concentration bound. A complete proof is contained in the Appendix \ref{sec:analysis}.
\paragraph{Rates} With the step-sizes specified in Theorem \ref{cor:fOLS-GD}, we see that the initial error is forgotten exponentially faster than the drift and sampling errors, which vanish at the rate $O\left(n^{-1/2}\right)$. The rate derived in Theorem \ref{cor:fOLS-GD} matches the asymptotically optimal convergence rate for SGD type schemes that do not involve a drifting target (see \cite{nemirovsky1983problem}).
\paragraph{Dependence on $d$} The dependence of the rate derived above on the dimension $d$ of $x_i$ is indirect, through the strong convexity constant $\mu$. For example, in the application to strongly-convex linear bandits in the next section, after an initial $d$ steps, the strong convexity constant is known and is of order $\mu = \Omega(1/d)$, and so the derived rate has a linear dependence on $d$. 
\paragraph{Iterate Averaging}
Ensuring the optimal rate for fOLS-GD requires knowledge of the strong convexity constant $\mu$. In our application to linear bandits in the next section we know this constant. However, we can use Polyak averaging together with the step size $\gamma_n = cn^{-\alpha}$ to arrive at an optimal rate independent of the choice of $c$. 

%%%%%%%%%%%%%%%%%%%%%%%%%%%%%%%%%%%%%%%%%%%%%%%%%%%%%%%%%%%%%%
%%%%%%%%%%%%%%%%%%%%%%%%%%%%%%%%%%%%%%%%%%%%%%%%%%%%%%%%%%%%%%
\section{Strongly Convex Bandits with Online GD}
\label{sec:strongly-convex-bandits}
\paragraph{Background for PEGE}
In this section, we assume that $\cD$ is a strongly convex set and the ``best action" function, denoted by $G(\theta):= \argmin_{x\in \cD}\{\theta\tr x\}$, is assumed to be smooth in the unknown parameter $\theta$ that governs the losses of the bandit algorithm (see (A5) below). 
PEGE of \cite{Tsitsiklis2010} is a well-known algorithm in this setting.
Recall from the introduction that it gathers data and computes least squares estimates of $\theta^*$ during exploration phases, between which it exploits the estimates during exploitation phases of growing length. 
Since strong convexity in the regression problem is guaranteed by the algorithm we propose a variant of PEGE which replaces the calculation of the least squares estimate with fOLS-GD (see Algorithm \ref{alg:PEGEwithGD}).
Whereas, after $m$ exploration phases, PEGE has incurred a complexity of $O(m d^3)$, our algorithm has incurred an improved  complexity of only $O(md^2)$.

\algblock{PEGEval}{EndPEGEval}
\algnewcommand\algorithmicPEGEval{\textbf{\em Exploration Phase}}
 \algnewcommand\algorithmicendPEGEval{}
\algrenewtext{PEGEval}[1]{\algorithmicPEGEval\ #1}
\algrenewtext{EndPEGEval}{\algorithmicendPEGEval}

\algblock{PIPEGEmp}{EndPIPEGEmp}
\algnewcommand\algorithmicPIPEGEmp{\textbf{\em Exploitation Phase}}
 \algnewcommand\algorithmicendPIPEGEmp{}
\algrenewtext{PIPEGEmp}[1]{\algorithmicPIPEGEmp\ #1}
\algrenewtext{EndPIPEGEmp}{\algorithmicendPIPEGEmp}

\algtext*{EndPEGEval}
\algtext*{EndPIPEGEmp}

\begin{algorithm}[t]  
\caption{fPEGE-GD}
\label{alg:PEGEwithGD}
\begin{algorithmic}
\State {\bfseries Input:}
Get a basis $\{b_1,\dots,b_d\}\in D$ for $\R^d$. Set $c = 4d/(3\lambda_{\min}(\sum_{i = 1}^d b_i b_i\tr))$ and $\theta_0 =  0$.
\For{$m = 1,2,\ldots$}
%     \PEGEval
\State \textit{Exploration Phase}
	\For{$n = (m-1)d$ {\bfseries to} $md - 1$}
		\State \quad   Choose arm $x_{n} = b_{n \text{ mod } md}$ and observe $y_{n}$.
		\State \quad   Update $\theta$ as follows:
	  				$\theta_{n} = \theta_{n-1} + \frac{c}{n}( (y_{j} - \theta_{n-1}\tr x_{j})x_{j})$,
				where $ j\sim\cU({1,\dots,n})$.
	\EndFor
%     \EndPEGEval
\State \textit{Exploitation Phase}
%     \PIPEGEmp
	  \State  Find $x = G(\theta_{md}) := \argmin_{x\in D}\{\theta_{md}\tr x\}$.
	  \State  Choose arm $x$ $m$ times consecutively.
%     \EndPIPEGEmp
  \EndFor
\end{algorithmic}
\end{algorithm}

\paragraph{Results}
We require the following extra assumptions from \cite{Tsitsiklis2010}:\\
\begin{inparaenum}
\item[\bfseries{({A}4')}] A basis $\{b_1,\dots,b_d\}\in\cD$ for $\R^d$ is known to the algorithm.\\
\item[\bfseries{({A}5)}] The function $ G(\theta)$ is $J$-Lipschitz.\\
\end{inparaenum}
The assumption (A5) is satisfied, for example, when $\cD$ is the unit sphere. However it is not satisfied when $\cD$ is discrete.
The main result that bounds the regret of fPEGE-GD is given below. The final bound is worse than that for PEGE by only a factor of $O(\log^4(n))$:
\begin{theorem}
\label{thm:pege-regret}
Let $ \lambda_{PEGE} := \lambda_{\min}(\sum_{i = 1}^d b_i b_i\tr)$. Under the assumptions (A2), (A3), (A4'), and (A5) and with  stepsize $\gamma_n = c/(4(c+n))$, where $c/(4\lambda_{PEGE})\in (2/3, 1)$, the cumulative regret $R_n$ of fPEGE-GD is bounded as follows:
\begin{align*}
R_n \le C K_1(n)^2d^{-1} (\l\theta^*\r+ \l\theta^*\r^{-1})n^{1/2},
\end{align*}
where $C$ is a constant depending on $\lambda_{PEGE}$ and $J$, and $K_1(n) = O(d\log^2(n))$.
\end{theorem}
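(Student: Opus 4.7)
The plan is to follow the phase-based regret decomposition of \cite{Tsitsiklis2010} and substitute the OLS estimate $\hat\theta_{md}$ with the fOLS-GD iterate $\theta_{md}$ in the exploitation bound. After $M$ complete phases of fPEGE-GD, the total number of rounds $n$ satisfies $n = Md + M(M+1)/2$, so $M = O(\sqrt{n})$, with $Md$ rounds spent on exploration and $M(M+1)/2$ rounds on exploitation. I would then treat the two contributions separately, pushing Theorem~\ref{cor:fOLS-GD} into the exploitation term.

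For the \emph{exploration regret}, each round incurs at most a constant multiple of $\|\theta^*\|$, so the contribution is of order $Md\|\theta^*\| = O(d\sqrt{n}\,\|\theta^*\|)$. For the \emph{exploitation regret}, I would use (A5) together with the optimality of $G(\theta_{md})$ for the map $x \mapsto \theta_{md}^\top x$ to show that the instantaneous regret in phase $m$ satisfies
\begin{align*}
(\theta^*)^\top G(\theta_{md}) - (\theta^*)^\top G(\theta^*) \le J\,\|\theta_{md} - \theta^*\|^2.
\end{align*}
The key observation here is that $\theta_{md}^\top(G(\theta_{md}) - G(\theta^*)) \le 0$ by the optimality of $G(\theta_{md})$, so writing $(\theta^*)^\top (G(\theta_{md}) - G(\theta^*)) = (\theta^* - \theta_{md})^\top(G(\theta_{md}) - G(\theta^*)) + \theta_{md}^\top(G(\theta_{md}) - G(\theta^*))$ and invoking Cauchy--Schwarz plus the $J$-Lipschitz property of $G$ yields the quadratic bound.

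The next step is to split $\|\theta_{md} - \theta^*\|$ using the triangle inequality into a tracking error and a least-squares error:
\begin{align*}
\|\theta_{md} - \theta^*\| \le \|\theta_{md} - \hat\theta_{md}\| + \|\hat\theta_{md} - \theta^*\|.
\end{align*}
The first summand is controlled by Theorem~\ref{cor:fOLS-GD}: basis sampling in the exploration phase guarantees $\lambda_{\min}(\bar A_{md}) \ge \lambda_{PEGE}$ for every $m \ge 1$, so (A4) holds with $\mu = \lambda_{PEGE}$ and the stepsize hypothesis $\mu c / 4 \in (2/3,1)$ is exactly the condition assumed here on $c/(4\lambda_{PEGE})$. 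This gives $\E\|\theta_{md} - \hat\theta_{md}\| \le K_1(md)/\sqrt{md+c}$. The second summand is bounded via the self-normalised confidence ball already used in the proof of Theorem~\ref{cor:fOLS-GD} (adapted from \cite{dani2008stochastic}), producing $\|\hat\theta_{md} - \theta^*\| = O(\sqrt{\beta_{md}/(\lambda_{PEGE}\,md)})$, an expression that is in fact already absorbed into the definition of $K_1(md)$.

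Putting these together, the exploitation contribution of phase $m$ is at most $m \cdot J\,\E\|\theta_{md} - \theta^*\|^2 = O(J K_1(md)^2/d)$, and summing over $M = O(\sqrt{n})$ phases yields total exploitation regret of order $J K_1(n)^2 \sqrt{n}/d$. Combining with the exploration term and collecting the $\|\theta^*\|$-dependence produces the announced bound, with $K_1(n) = O(d\log^2 n)$ arising from substituting $\mu = \lambda_{PEGE} = \Omega(1)$ (independent of $d$ after $d$ exploration steps) into the formula for $K_1$ from Theorem~\ref{cor:fOLS-GD}. The main obstacle I foresee is the step converting the $L^1$ / high-probability statement of Theorem~\ref{cor:fOLS-GD} into the second-moment bound $\E\|\theta_{md} - \hat\theta_{md}\|^2 = O(K_1(md)^2/(md))$ needed above; this I would handle by integrating the sub-Gaussian tail given by \eqref{eq:l2-prob-bound} with $\delta$ scaled as a negative polynomial in $n$, so that the extra $\log(1/\delta)$ factors merge into the $\log$-terms already present in $K_1$.
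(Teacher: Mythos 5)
Your overall route is the same as the paper's: decompose the regret by phases as in \cite{Tsitsiklis2010}, bound the per-round exploitation loss through the optimality trick $\theta_{md}^{\top}(G(\theta_{md})-G(\theta^*))\le 0$ plus Lipschitzness of $G$, and control $\|\theta_{md}-\hat\theta_{md}\|$ via Theorem \ref{cor:fOLS-GD} using the strong convexity created by basis sampling. However, there is a genuine error in the step that feeds Theorem \ref{cor:fOLS-GD}: you claim $\lambda_{\min}(\bar A_{md})\ge \lambda_{PEGE}$ and hence $\mu=\lambda_{PEGE}=\Omega(1)$. Recall $\bar A_n = n^{-1}\sum_{i\le n} x_i x_i^{\top}$ is the \emph{averaged} Gram matrix; after $m$ complete exploration phases the regression data consist of $m$ copies of the basis, so $\bar A_{md} = d^{-1}\sum_{i=1}^d b_i b_i^{\top}$ and $\lambda_{\min}(\bar A_{md}) = \lambda_{PEGE}/d$ (indeed under (A2) one always has $\lambda_{\min}(\bar A_n)\le \mathrm{tr}(\bar A_n)/d \le 1/d$). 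The paper's proof uses $\mu \ge \lambda_{PEGE}/(2d)$, consistent with its ``Dependence on $d$'' remark that $\mu=\Omega(1/d)$ here. This is not a cosmetic constant: the claimed $K_1(n)=O(d\log^2 n)$ and the $d$-bookkeeping leading to the $K_1(n)^2 d^{-1}$ factor in the regret bound come precisely from $1/\mu = \Theta(d)$, and the stepsize condition $\mu c/4\in(2/3,1)$ of Theorem \ref{cor:fOLS-GD} is then met by taking $c=\Theta(d/\lambda_{PEGE})$ (as in the algorithm's initialisation $c=4d/(3\lambda_{PEGE})$), not by the literal identification you make. With $\mu=\Omega(1)$ your final dimension dependence would not reproduce the stated bound.

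A second, smaller gap is the Lipschitz step. You invoke (A5) directly to get $(\theta^*)^{\top}(G(\theta_{md})-G(\theta^*))\le J\|\theta_{md}-\theta^*\|^2$, whereas the paper additionally uses the scale invariance $G(\theta)=G(a\theta)$ together with Lemma 3.5 of \cite{Tsitsiklis2010} to compare the \emph{normalised} parameters, which yields $2J\|\theta_{md}-\theta^*\|^2/\|\theta^*\|$ and is the source of the $\|\theta^*\|^{-1}$ term in the theorem. Since $G$ is constant along rays, its Lipschitz property can only hold on the unit sphere, so the unnormalised bound you use is not actually available when $\|\theta\|$ is small; the normalisation step is needed, not optional. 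On the positive side, your last remark — converting the first-moment and high-probability statements of Theorem \ref{cor:fOLS-GD} into the second-moment bound $\E\|\theta_{md}-\hat\theta_{md}\|^2$ by integrating the sub-Gaussian tail with $\delta$ polynomial in $n$ — addresses a step that the paper's own proof passes over silently when it asserts $\E\|\theta_n-\theta^*\|^2\le K_1(n)^2/(dn)$, and that part of your plan is sound.
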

\begin{proof}
We have
$
\lambda_{\min}(\bar A_n) \ge \lambda_{\min}\left(\frac{(n \text{ mod } d)\sum_{i = 1}^d b_i b_i\tr}{[(n \text{ mod } d)+1]d}\right) \ge \lambda_{\min}\left(\sum_{i = 1}^d b_i b_i\tr\right)/(2d)
$
for all $n>d$.
So, choosing $c$ as in the theorem statement, we can apply Theorem \ref{cor:fOLS-GD} to get:
\begin{align}\label{eq:exp-bound-pege}
\E \l \theta_n - \theta^* \r^2 \le K_1(n)^2/(dn).
\end{align}
Now to complete the proof we only need to reprove Lemma 3.6 of \cite{Tsitsiklis2010}, which states that for all $n\ge d$, 
$\E \l \theta^* (G(\theta^*) - G(\theta_{md}))\r \le \frac{K_1(n)}{dm \l \theta^* \r}$:
\begin{align*}
\| \theta^* (G(\theta^*) - G(\theta_{md}))\|_2
=& \big\| (\theta^* - \theta_{md})\tr G(\theta^*) + (G(\theta^*) - G(\theta_{md}) )\tr \theta_{md} + (\theta_{md} - \theta^*)G(\theta_{md})\big\|_2 \\
 \le&\l (\theta^* - \theta_{md})\tr(G(\theta^*) - G(\theta_{md}) ) \r \le \frac{2J \l \theta^* - \theta_{md}\r^2}{\l \theta^*\r},
\end{align*}
where the second inequality we have used that $G(\theta) = G(a\theta)$ for all $a>0$, (A5), and Lemma 3.5 of \cite{Tsitsiklis2010}.

The rest of the proof follows that of Theorem 3.1 of \cite{Tsitsiklis2010}.
\end{proof}

%%%%%%%%%%%%%%%%%%%%%%%%%%%%%%%%%%%%%%%%%%%%%%%%%%%%%%%%
%%%%%%%%%%%%%%%%%%%%%%%%%%%%%%%%%%%%%%%%%%%%%%%%%%%%%%%%
\section{Online GD for Regularized Least Squares}
\label{sec:fRLS-GD}
Ideally an online algorithm would not need to satisfy an assumption such as (A4).
Perhaps the most obvious way to obviate (A4) is to regularise.
In an offline setting the natural regularisation parameter would be $\lambda/T$ for some $\lambda>0$, where $T$ is the size of the batch.
However in an online setting we envisage obtaining arbitrary amounts of information, and so we need to regularize adaptively at each time step by $\lambda_n$ (see \eqref{eq:reg-ls}).
As outlined earlier, the fRLS-GD algorithm attempts to shadow the solutions $\ttheta_n$ of the $\lambda_n$-regularised problem, using the following iterate update: 
\begin{align}
 \label{eq:roarlsa-update}
 \theta_n = \theta_{n-1} + \gamma_n( (y_{i_n} - \theta_{n-1}\tr x_{i_n})x_{i_n}- \lambda_n\theta_{n-1}),
\end{align}
where $i_n\sim\cU({1,\dots,n})$. 

\paragraph{Discussion}
It is interesting to note that the analysis in Theorem \ref{cor:fOLS-GD} does not generalise to this setting. Following the same argument as for the proof of Theorem \ref{cor:fOLS-GD} will lead to the iteration:
\begin{align}
\theta_n - \ttheta_{n}
	=& \underbrace{\tpi_n (\theta_{n_0} - \theta^*)}_{\text{Initial Error}}
		- \underbrace{\sum\limits_{k=1}^{n} \tpi_n \tpi_k^{-1} (\ttheta_{k} - \ttheta_{k-1})}_{\text{Drift Error}}	+ \underbrace{\sum\limits_{k=1}^{n} \gamma_k \tpi_n \tpi_k^{-1} \Delta \tM_k}_{\text{Sampling Error}},\label{eq:reg-decomp}
\end{align}
where $\tpi_n:=\prod_{k=1}^{n}\left(I - \gamma_k (\bar A_k+ \lambda_k I)\right)$. Under the assumption that we have no control over the smallest eigenvalue of $\bar A_k$, we can only upper bound the initial error by $\exp(-\sum_{k = 1}^n \gamma_k\lambda_k)$. Therefore, in order that the initial error go to zero we must have that $\sum_{k = 1}^n \gamma_k\lambda_k\rightarrow \infty$ as $n\rightarrow \infty$. Taking a step size of the form $\gamma_n = O(n^{-\alpha})$ therefore forces $\lambda_n = \Omega(n^{-(1-\alpha)})$. However, examining the drift
\begin{align*}
\ttheta_{n-1} - \ttheta_n =& \xi_n (A_{n-1}+(n-1)\lambda_{n-1}I)^{-1}x_n  - (x_n\tr(\ttheta_n -\theta^*))(A_{n-1}+(n-1)\lambda_{n-1}I)^{-1}x_n\\
	&\qquad\qquad\qquad\qquad\qquad\qquad+ ((n-1)\lambda_{n-1} - n\lambda{n})(A_{n-1}+(n-1)\lambda_{n-1}I)^{-1}\ttheta_{n}.
\end{align*}
So when $\lambda_n = \Omega(n^{-(1-\alpha)})$, then we find that $\ttheta_{n-1} - \ttheta_n = \Omega(n^{-1})$, whenever $\alpha\in(0,1)$. This, when plugged into \eqref{eq:reg-decomp} results in only a constant bound on the error (note, $\alpha$ must be chosen in $(1/2,1)$ to ensure (A1) holds).
Unlike in the setting of \cite{tarres2011online}, we do not assume that the data arrive from a distribution, and hence the bias error is difficult to control.

%%%%%%%%%%%%%%%%%%%%%%%%%%%%%%%%%%%%%%%%%%%%%%%%%%%%%%%%%%%%%%
%%%%%%%%%%%%%%%%%%%%%%%%%%%%%%%%%%%%%%%%%%%%%%%%%%%%%%%%%%%%%%
\section{Numerical Experiments}
\label{sec:fRLS-GD-bandits}
\paragraph{Background for LinUCB} In this section the action sets $\cD_n\subset \cD$ are finite, but possibly varying. A popular algorithm for such settings is the LinUCB algorithm. This algorithm calculates UCBs for the mean reward obtained by choosing each individual feature in $\cD_n$ as follows:
\begin{align*}
\forall x\in\cD_n, \quad UCB(x):= x\tr \hat\theta_n + \kappa\sqrt{ x\tr A_n^{-1} x},
\end{align*}
where $\kappa$ is a parameter set by the agent that can be understood to be controlling the rate of exploration the algorithm performs.
Having calculated the UCBs for all available features the agent then chooses the feature with the highest UCB.
LinUCB needs to compute online the inverse of the matrix $A_n^{-1}$ in order to compute the UCBs for each iteration of the algorithm, and so we propose improving the complexity by using an SGD scheme to approximate the UCBs.
Since LinUCB cannot guarantee strong convexity of the regression problem, we investigate experimentally applying the regularised fRLS-GD in place of RLS solutions. 

\paragraph{Tracking the UCBs}
While we can track the regularised estimates $\ttheta_n$ using fRLS-GD as given above, to track the UCBs we derive the analogous update rule for each feature $x\in\cD_n$:
\begin{align}
 \label{eq:roarlsa-update-variation}
\phi_n = \phi_{n-1} + \gamma_n( (n^{-1} x - ((\phi_{n-1})\tr x_{(i_n)})x_{(i_n)})),
\end{align}
 where $i_n\sim\cU({1,\dots,n})$. The UCB value corresponding to feature $x$ is then set as follows:
\begin{align*}
UCB(x):= {x}\tr \theta_n + \kappa\sqrt{ {x}\tr \phi_{n} }.
\end{align*}
If the action sets $\cD_n$ were fixed (say $\cD_1$), then we take one step according to \eqref{eq:roarlsa-update-variation} for each arm $x \in \cD_1$ in each iteration $n$ of LinUCB. The computational cost of this LinUCB variant is of order $O(|\cD_1 |dn)$, as opposed to the $O(d^2n)$ incurred by the vanilla LinUCB algorithm that directly calculates $A_n^{-1}$.
This variant would give good computational gains when $|\cD_1 |\ll d$.
If the action sets change with time, then one can perform a batch update, i.e., run $T$ steps according to \eqref{eq:roarlsa-update-variation} for each feature $x \in \cD_n$ in iteration $n$ of LinUCB.
This would incur a computational complexity of order $O(KTdn)$, where $K$ is an upper bound on $|\cD_n|$ for all $n$, and 
result in good computational gains when $KT\ll d$.

%%%%%%%%%%%%%%%%%%%%%%%%%%%%%%%%%%%%%%%%%%%%%%%%%%%%%%%%%%%%%%
\algblock{PEval}{EndPEval}
\algnewcommand\algorithmicPEval{\textbf{\em Approximate RLS}}
 \algnewcommand\algorithmicendPEval{}
\algrenewtext{PEval}[1]{\algorithmicPEval\ #1}
\algrenewtext{EndPEval}{\algorithmicendPEval}

\algblock{PImp}{EndPImp}
\algnewcommand\algorithmicPImp{\textbf{\em UCB computation}}
 \algnewcommand\algorithmicendPImp{}
\algrenewtext{PImp}[1]{\algorithmicPImp\ #1}
\algrenewtext{EndPImp}{\algorithmicendPImp}

\algtext*{EndPEval}
\algtext*{EndPImp}

\begin{algorithm}  
\caption{fLinUCB-GD}
\label{alg:LinUCBwithBGD}
\begin{algorithmic}
\State {\bfseries Initialisation:} Set $\theta_0$, $\gamma_k$ - the step-size sequence.
  \For{$n = 1,2,\ldots$}
  \PEval
% 	\State \bf{\emph{Approximate RLS}}
	\State Observe article features $x_{n}^{(1)},\ldots,x_{n}^{(K)}$
    \State Approximate $\hat\theta_n$ using fRLS-GD iterate $\theta_n$ \eqref{eq:fRLS-GD-update}
  \EndPEval
  \PImp        
	\For{$k=1,\ldots,K$}
        \State Estimate confidence parameter $\phi_{n}^{(k)}$ using \eqref{eq:roarlsa-update-variation}
        \State Set $\textrm{UCB}(x_{n}^{(k)}) := \theta_{n}\tr x_{n}^{(k)} + \kappa \sqrt{ {x_{n}^{(k)}}\tr \phi_{n}^{(k)} }$
        \EndFor        
  \EndPImp     
  \State Choose article $\argmax_{k=1,\ldots,K} \textrm{UCB}(x_{n}^{(k)})$ and observe the reward.
  \EndFor
\end{algorithmic}
\end{algorithm}

%%%%%%%%%%%%%%%%%%%%%%%%%%%%%%%%%%%%%%%%%%%%%%%%%%%%%%%%%%%%%%
\label{sec:experiments}
\paragraph{Simulation Setup.}
We perform experiments on a news article recommendation platform provided for the ICML exploration and exploitation challenge (\cite{li12}).
This platform is based on the user click log dataset from the Yahoo! front page, provided under the Webscope program (\cite{webscope}). 
An algorithm for this platform is required to repeatedly select a news article from a pool of articles and show them to users.
Each article-user pair is described in the dataset by a feature vector, which the algorithm can use to make its decisions.

\begin{figure}
    \centering
     \includegraphics[width=2.3in]{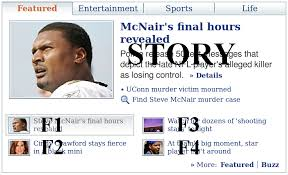}
    \caption{The {\em Featured} tab in Yahoo! Today module \citep{li2010contextual}}
\end{figure}
 
 \begin{figure}
    \centering
\tabl{c}{\scalebox{0.9}{\begin{tikzpicture}
\begin{axis}[
ybar={2pt},
legend style={at={(0.5,-0.2)},anchor=north,legend columns=-1},
% legend pos=outer north east,
% legend columns=2,
legend image code/.code={\path[fill=white,white] (-2mm,-2mm) rectangle
(-3mm,2mm); \path[fill=white,white] (-2mm,-2mm) rectangle (2mm,-3mm); \draw
(-2mm,-2mm) rectangle (2mm,2mm);},
ylabel={\bf runtime (ms)},
xlabel={},
symbolic x coords={0, 1, 2, 3},
xmin={0},
xmax={3},
xtick=data,
ytick align=outside,
xticklabels={{\bf Day-2,\bf Day-4}},
xticklabel style={align=center},
bar width=16pt,
nodes near coords,
    every node near coord/.append style={font=\tiny},
   nodes near coords align={vertical},
   grid,
grid style={gray!30},
width=13cm,
height=7cm,
]
\addplot   coordinates {  (1,1367018) (2,1717997) }; 
\addlegendentry{LinUCB}
\addplot coordinates {  (1,4933) (2,6474) }; 
\addlegendentry{fLinUCB-GD}
\addplot coordinates {  (1,81818) (2,107386) }; 
\addlegendentry{fLinUCB-SVRG}
\addplot coordinates {  (1,44504) (2,55630) }; 
\addlegendentry{fLinUCB-SAG}
\end{axis}
\end{tikzpicture}}\\[1ex]}
\caption{Runtimes (in ms) on two days of the dataset for LinUCB and its SGD variants}
\label{fig:runtimes}
\end{figure}
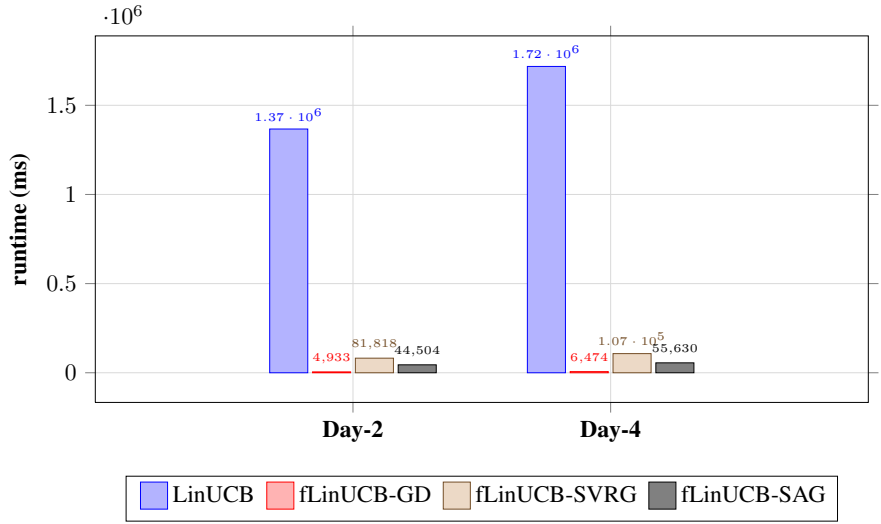

We implement the LinUCB algorithm (popular for this setting) as well as three SGD variants. The first SGD variant is based on fRLS-GD, while the other two variants are based on two recent approaches for accelerating the convergence of SGD-type schemes. We describe these below.
\begin{description}
 \item[fLinUCB-GD.] This is described in Algorithm \ref{alg:LinUCBwithBGD} and uses fRLS-GD in place of RLS.
 \item[fLinUCB-SVRG.] This is similar to the above algorithm, except that the SGD scheme used is derived from \cite{johnson2013accelerating}.
The first scheme is derived from \cite{johnson2013accelerating} and updates the parameter as follows:
  Let $f_{i,n}(\theta):=\frac{1}{2} (y_i - \theta\tr x_i)^2 + \lambda_n \l\theta\r^2$, $F_n(\theta)= \frac1{n}\sum\limits_{i=1}^{n-1} f_{i,n}(\theta)$ and $\bar\theta_{n} = \sum\limits_{i=1}^{n-1} \theta_i$. Then,
\begin{align}
 \label{eq:svrg-update}
 \theta_n = \theta_{n-1} - \gamma_n( f'_{i_n}(\theta_{n-1}) - f'_{i_n}(\bar\theta_n) + F'_n(\bar\theta_n)),
\end{align}
where $i_n$ is picked uniformly at random in $\{1,\ldots,n\}$.
 \item[fLinUCB-SAG.] This is a variant that uses the SGD scheme proposed by \cite{roux2012stochastic}. The  updates here are according to
\begin{equation}
\label{eq:SAG}
\theta_{n} = \theta_{n-1} - \dfrac{\gamma_n}{n}\sum_{i=1}^n y_{n,i}, \quad \text{ where } \quad y_{n,i} = \begin{cases}
f_i'(\theta_{n-1}) & \textrm{if $i = i_n$,}\\
y_{n-1,i} & \textrm{otherwise.}
\end{cases}
\end{equation}
where $i_n$ is picked uniformly at random in $\{1,\ldots,n\}$.
\end{description}

\begin{remark}
The last two SGD schemes presented above are shown to converge at a geometric rate for a single-batch training problem, while SGD can converge only at $O(1/n)$ rate. However, this rate acceleration comes at an additional computational cost in comparison to regular SGD. 
Moreover, in a drifting least squares regression setting that we consider in this paper, both these variants would still suffer from a drift error as discussed in Section \ref{sec:fRLS-GD} of the main paper and hence, obtaining a sub-linear rate of convergence is challenging even for these schemes.
\end{remark}

%%%%%%%%%%%%%%%%% Tracking error plots %%%%%%%%%%%%%%%%%%%%%%%%%%%
 \begin{figure}
    \centering
    \begin{tabular}{cc}
        %%%%%% normdiff plot
        \subfigure[Tracking error: fRLS-GD]
        {
	 \hspace{-2em}\tabl{c}{\scalebox{0.85}{\begin{tikzpicture}
	    \begin{axis}[xlabel={iteration $n$ of flinUCB-GD},ylabel={$\l\theta_{n} - \ttheta_n\r$},ytick pos=left,xtick pos=left,grid,grid style={gray!30},ymax=1,restrict x to domain=0:50000]
	    \addplot+[blue] table[x index=0,y index=1,col sep=space] {results/onlinegd_subsampled.log};
	    \addlegendentry{fRLS-GD}% y index+1 since humans count from 1
	    \end{axis}
	    \end{tikzpicture}}\\[1ex]}
           \label{fig:normdiff1}	    
        }
&
        \subfigure[Tracking error: fRLS-SVRG]
        {
	    \hspace{-2em}\tabl{c}{\scalebox{0.85}{\begin{tikzpicture}
	 \begin{axis}[xlabel={iteration $n$ of flinUCB-SVRG},ylabel={$\l\theta_{n} - \ttheta_n\r$},ytick pos=left,xtick pos=left,grid,grid style={gray!30},ymax=1,restrict x to domain=0:50000]
	    \addplot+[red] table[x index=0,y index=1,col sep=space] {results/svrg_subsampled.log};
	    \addlegendentry{fRLS-SVRG}% y index+1 since humans count from 1
	    \end{axis}
	    \end{tikzpicture}}\\[1ex]}
           \label{fig:normdiff2}	    
        }
\end{tabular}
\begin{tabular}{c}
         \subfigure[Tracking error: fRLS-SAG]
        {
	  \hspace{-2em}\tabl{c}{\scalebox{0.85}{\begin{tikzpicture}
	   \begin{axis}[xlabel={iteration $n$ of flinUCB-SAG},ylabel={$\l\theta_{n} - \ttheta_n\r$},ytick pos=left,xtick pos=left,grid,grid style={gray!30},ymax=1,restrict x to domain=0:50000]
	    \addplot+[green] table[x index=0,y index=1,col sep=space] {results/sag_subsampled.log};
	    \addlegendentry{fRLS-SAG}% y index+1 since humans count from 1
	    \end{axis}
	    \end{tikzpicture}}\\[1ex]}
           \label{fig:normdiff3}	    
        }
\end{tabular}
\caption{Performance evaluation of fast LinUCB variants using tracking error} 
\end{figure}
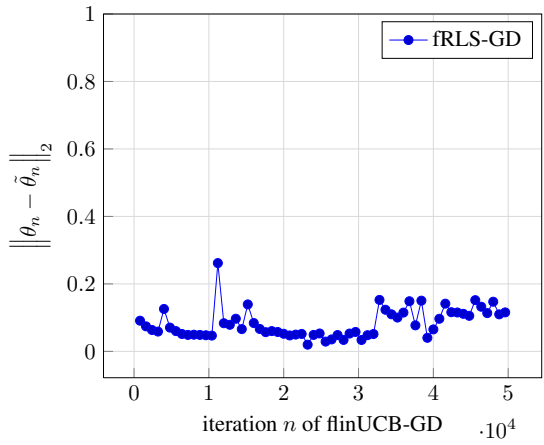
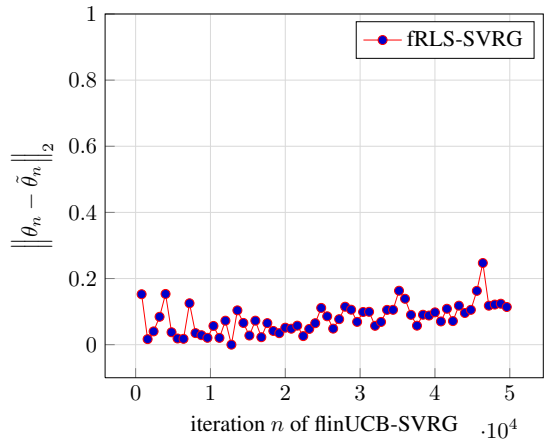
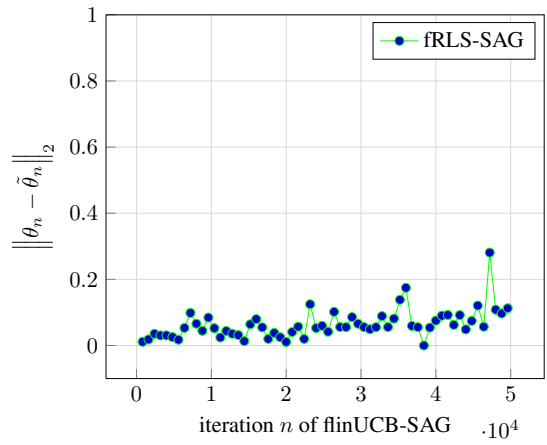

We set the various parameters of the problem as well as SGD algorithms as follows:\\
\begin{table}[h]
\begin{tabular}{|c|c|}
\hline
\multirow{2}{*}{\textbf{Algorithm}} & \multirow{2}{*}{\textbf{Parameters}} \\ 
 &  \\ \hline
\textbf{fLinUCB-GD} & Regularisation parameter $\lambda_n = \dfrac{1}{n^{1-\alpha}}$, $\alpha=0.6$, stepsize $\gamma_n = \dfrac{1}{100+n}$ \\ \hline
\textbf{fLinUCB-SVRG} &  Regularisation parameter $\lambda_n = \dfrac{1}{n}$, stepsize $\gamma_n = 0.0005$ \\ \hline
\textbf{fLinUCB-SAG} &  Regularisation parameter $\lambda_n = \dfrac{1}{n}$, stepsize $\gamma_n = 0.005$ \\ \hline
\end{tabular}
\end{table}

\paragraph{Results} We use tracking error and runtimes as performance metrics for comparing the algorithms. The tracking error is the difference in $\ell^2$ norm between the SGD iterate $\theta_n$ and RLS solution $\ttheta_n$, at each instant $n$ of the SGD variant of LinUCB. 

Figs. \ref{fig:normdiff1}--\ref{fig:normdiff3} present the tracking error with day $2$'s data file as input for fRLS-GD, SVRG and SAG variants of LinUCB, respectively. It is evident that all the SGD schemes track the corresponding RLS solutions consistently. 
Fig. \ref{fig:runtimes} report the runtimes observed on two different data files corresponding to days $2$ and $4$ in October, $2009$ (see \cite{webscope}) of the dataset. 
It is evident that the SGD schemes result in significant computational gains in comparison to classic RLS solvers (e.g. Sherman-Morrison lemma). 

Finally, we observed that the SGD variants under best configurations achieved $75\%$ of the regular LinUCB CTR score. CTR score is the ratio of the number of clicks an algorithm gets to the total number of iterations it completes, multiplied by $10000$. Considering that the dataset contains very sparse features and also the fact that the rewards are binary, with a reward of $1$ occurring rarely, we believe LinUCB has not seen enough data to have converged UCB values and hence the observed loss in CTR may not be conclusive.

%%%%%%%%%%%%%%%%%%%%%%%%%%%%%%%%%%%%%%%%%%%%%%%%%%%%%%%%%%%%%%
%%%%%%%%%%%%%%%%%%%%%%%%%%%%%%%%%%%%%%%%%%%%%%%%%%%%%%%%%%%%%%
\section{Conclusions}
\label{sec:conclusions}
We analysed online SGD schemes for the problem of drifting least squares regression problems in the context of a higher level algorithm.
In particular, when the higher level algorithm can guarantee strong convexity in the data, we provided error bounds both in expectation and in high probability.
Further, we derived an SGD variant of PEGE linear bandit algorithm with a speed up of $O(d)$ at the cost of only logarithmic factors in the regret.
For the non-strongly convex setting, we studied an adaptively regularised SGD scheme by combining it with LinUCB.
The empirical results of this algorithm on a large-scale news recommendation application are encouraging.
However a theoretical analysis of the adaptively regularised SGD scheme remains challenging, and is an interesting direction for future work.

\paragraph{\bf Acknowledgments}
The first author was gratefully supported by the EPSRC project, Autonomous Intelligent Systems EP/I011587.
The second and third authors would like to thank the European Community's Seventh Framework Programme (FP$7/2007-2013$) under grant agreement n$^o$ $270327$ for funding the research leading to these results.

\section*{Appendix}

\appendix

%%%%%%%%%%%%%%%%%%%%%%%%%%%%%%%%%%%%%%%%%%%%%%%%%%%%%%%%%%%%%%
\section{Proof of Theorem \ref{cor:fOLS-GD}}
\label{sec:analysis}
Let $z_n:= \theta_n - \hat\theta_n$ denote the approximation error. Throughout this proof, we shall assume that $n\ge n_0$, in accordance with assumption (A4). The proof involves the following steps:
  \begin{description}
   \item[Step 1] Proposition \ref{thm:online-high-prob} bounds the deviation of $z_n$ from its mean in high probability;
   \item[Step 2] Proposition \ref{thm:online-exp} bounds the mean of $z_n$ itself;
   \item[Step 3] the final step is to combine the above two propositions, with the step-sizes $\gamma_n$ chosen as $c/(4(c+n))$.
  \end{description}
In the following, we describe each of the individual steps above in detail.

\subsection*{\underline{Step 1: High-probability bound}}
In the following proposition, we  bound the deviation in high probability of the approximation error, $z_n:= \theta_n - \hat\theta_n$, from its mean.  The proof technique is similar to that used by \cite{frikha2012concentration}. However, our analysis is much simpler, we make all the constants explicit for the problem at hand, and we deal with the extra error incurred as a result of the drifting target $\hat\theta_n$. 
\begin{proposition}
\label{thm:online-high-prob}
Let $z_n:=\theta_n - \hat\theta_n$. Then,
under (A1)-(A3), for all $n\geq d$, we have
\begin{align*}
P(   \l z_n \r - \E \l z_n \r \ge \epsilon ) \le \e\left(- \epsilon^2\left( 2 \sum_{i=n_0+1}^{n} L^2_i\right)^{-1}\right),
\end{align*}
where $L_i^2 := \gamma_i^2 \prod_{j=i}^{n-1} (1 - 2 \mu \gamma_{j+1}\left(1-\gamma_{j+1}\right))$.
\end{proposition}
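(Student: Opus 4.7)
The plan is a Doob-martingale decomposition of $\l z_n \r$ with respect to the sampling filtration, bounding each increment by a one-step coupling argument that exploits the explicit form of the fOLS-GD recursion. This follows the strategy of \cite{frikha2012concentration}, but, as suggested in the proof sketch of Theorem~\ref{cor:fOLS-GD}, the constants $L_i$ can be made fully explicit by working directly with the update rule.

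Concretely, I would set $\cH_k := \sigma(i_1,\ldots,i_k)$ and $g_k := \E[\,\l z_n \r \mid \cH_k\,]$ for $k=n_0,\ldots,n$. Then $g_{n_0}=\E\l z_n\r$, $g_n=\l z_n\r$, and
$$\l z_n \r - \E \l z_n \r \;=\; \sum_{i=n_0+1}^{n}(g_i-g_{i-1})$$
is a sum of martingale differences. To control the $i$-th increment I introduce an independent copy $i_i'$ of $i_i$ given $\cH_{i-1}$, run a coupled trajectory $\theta_k'$ which agrees with $\theta_k$ except that it uses $i_i'$ at step $i$, and set $u_k := \theta_k - \theta_k'$. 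Using (A2), (A3), and the boundedness of $\theta^*$, the one-step perturbation satisfies $\l u_i\r = O(\gamma_i)$ almost surely, and since both trajectories share $i_k$ for $k>i$,
$$u_k \;=\; (I-\gamma_k x_{i_k}x_{i_k}\tr)\,u_{k-1}.$$

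The contraction factor in $L_i$ then emerges by taking conditional expectations over $i_k$ (uniform on $\{1,\ldots,k\}$): since $\l x_{i_k}\r\le 1$ we have $(x_{i_k}x_{i_k}\tr)^2 \preceq x_{i_k}x_{i_k}\tr$, so
$$\E\bigl[(I-\gamma_k x_{i_k}x_{i_k}\tr)^2\bigr] \;\preceq\; I - \gamma_k(2-\gamma_k)\bar A_k \;\preceq\; \bigl(1-2\mu\gamma_k(1-\gamma_k)\bigr)I$$
by (A4). Iterating this contraction from $k=i+1$ up to $n$, starting from $\l u_i\r^2=O(\gamma_i^2)$, recovers exactly $L_i^2=\gamma_i^2\prod_{j=i}^{n-1}(1-2\mu\gamma_{j+1}(1-\gamma_{j+1}))$ as a bound on $\E[\,\l u_n\r^2 \mid \cH_i\,]$. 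A standard coupling/Jensen estimate converts this into a bound on $|g_i-g_{i-1}|$, and a martingale Azuma-Hoeffding inequality then delivers the claimed tail bound with $\sum_{i=n_0+1}^n L_i^2$ in the exponent.

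The main obstacle I anticipate is the passage from the \emph{averaged} (conditional-$L^2$) contraction to a concentration-ready (sub-Gaussian) control on each martingale increment: the factor $1-2\mu\gamma_k(1-\gamma_k)$ appears only after averaging over $i_k$, whereas a naive Azuma bound asks for a pointwise estimate. Following \cite{frikha2012concentration}, the way through is that each $i_k$ takes values in a bounded discrete set, so combining the almost-sure coupling control of $\l u_n\r$ with the averaged contraction yields precisely the sub-Gaussian parameter $L_i$ needed to run the martingale concentration argument as stated.
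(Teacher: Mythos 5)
Your architecture (Doob martingale plus per-step coupling, with the contraction extracted from $\E\big[(I-\gamma_k x_{i_k}x_{i_k}\tr)^2\big]\preceq (1-2\mu\gamma_k(1-\gamma_k))I$) is close in spirit to the paper's, but you have chosen the wrong source of randomness to drive the concentration, and this creates a genuine gap. With $\cH_k=\sigma(i_1,\dots,i_k)$ the quantity $\|z_n\|$ is not $\cH_n$-measurable: both $\theta_n$ (through the observed rewards $y_{i_k}=x_{i_k}\tr\theta^*+\xi_{i_k}$) and the target $\hat\theta_n$ depend on the noise $\xi_1,\dots,\xi_n$, so $g_n=\E[\|z_n\|\mid \cH_n]\neq\|z_n\|$ and your telescoping identity $\|z_n\|-\E\|z_n\|=\sum_i (g_i-g_{i-1})$ is false; the fluctuations caused by the noise are never controlled anywhere in your argument. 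The paper's proof takes the opposite route: the filtration contains the noise, and $L_i$ arises as the Lipschitz constant of $g_i=\E[\|z_n\|\mid\theta_i]$ viewed as a function of the noise value used at step $i$. Perturbing that noise from $\xi$ to $\xi'$ changes the step-$i$ update by exactly $\gamma_i|\xi-\xi'|\,\|x_{i_i}\|\le\gamma_i|\xi-\xi'|$, independently of $\theta_{i-1}$, the subsequent shared-randomness steps contribute the factor $\prod_{j=i+1}^{n}(1-2\mu\gamma_j(1-\gamma_j))^{1/2}$ through the same mean-square contraction you identified, and the tail bound then follows from the sub-Gaussian estimate $\E\exp(\lambda g(\xi))\le\exp(\lambda^2 L_i^2/2)$ for bounded noise, a Chernoff bound, and optimisation over $\lambda$ --- not from an Azuma bounded-differences argument in the indices.

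Even for the index-coupling you propose, the key step is unavailable: replacing $i_i$ by an independent copy changes the update by $\gamma_i\big[(y_{i_i}-\theta_{i-1}\tr x_{i_i})x_{i_i}-(y_{i_i'}-\theta_{i-1}\tr x_{i_i'})x_{i_i'}\big]$, whose norm scales like $\gamma_i(1+\|\theta^*\|+\|\theta_{i-1}\|)$. Since the iterates are not projected, (A2)--(A3) give no almost-sure bound on $\|\theta_{i-1}\|$, so your claim $\|u_i\|=O(\gamma_i)$ a.s.\ is unsupported; and even granting such a bound, the prefactor would not be $\gamma_i$, so you would not recover the stated $L_i^2$ exactly. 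Your closing paragraph flags the averaged-versus-pointwise issue but does not resolve it; in the paper it disappears because the Lipschitz-in-$\xi$ estimate is obtained after Jensen's inequality has already averaged over all future randomness, so only the scalar bounded noise at step $i$ needs to be sub-Gaussian. (A shared minor point: like the paper, you need (A4) to supply $\mu$, which the proposition's ``(A1)--(A3)'' wording omits.)
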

\begin{proof}
% \paragraph{Step 1} 
Let $\cH_i$ be the sigma field generated by the random variables $i_0,\ldots,i_n$ and $\xi_1,\dots,\xi_n$. We decompose $\l z_n \r^2 - E \l z_n \r^2$ into a sum of martingale differences as follows:
\begin{align}
 \l z_n \r - \E \l z_n \r = & \sum\limits_{i=n_0 + 1}^{n} \E [\l z_n\r \left| \cH_i \right.]  - \E [\l z_n\r \left| \cH_{i-1} \right.] \nonumber\\
= & \sum\limits_{i=n_0 + 1}^{n} \E [\l z_n\r \left| \theta_i \right.]  - \E [\E(\l z_n\r\left| \theta_i \right.) \left| \cH_{i-1} \right.] 
= \sum\limits_{i=n_0 + 1}^{n} D_i, \label{eq:prob-equivalence-appendix}
\end{align}
where $D_i \stackrel{\triangle}{=} g_i - \E [ g_i \left| \cH_{i-1} \right.]$ and $g_i =  \E [\l z_n\r \left| \theta_i \right.]$.

We now establish that the functions $g_i$ are Lipschitz continuous in the noise $\xi_i$, with Lipschitz constants $L_i$. 
We are interested in measuring the difference in the iterate $\theta_n$ at instant $n$, while starting from two different initial values at instant $i$.
To do this, let $\Theta_n^i(\theta)$ denote the $n^{th}$ iterate, $\theta_n$, given that at instant $i$, we set it to $\theta$ (i.e., $\theta_i = \theta$). 
Then from the equalities
\begin{align*}
\Theta_n^i(\theta) - \Theta_n^i(\theta')
	= \left(I - \gamma_n x_{i_n}x_{i_n}^T\right)\left[\Theta_{n-1}^i(\theta) - \Theta_{n-1}^i(\theta')\right]
\end{align*}
and
\begin{align*}
\left(I - \gamma_n x_{i_n}x_{i_n}^T\right)^T\left(I - \gamma_n x_{i_n}x_{i_n}^T\right)
 = \left(I -2 \gamma_n (1-\|x_{i_n}\|_2^2\gamma_n)x_{i_n}x_{i_n}^T\right),
\end{align*}
using Jensen's inequality, and Cauchy-Schwarz, we can deduce that
\begin{align*}
E\left[ \| \Theta_n^i(\theta) - \Theta_n^i(\theta')\|_2 \mid \Theta_{n-1}^i(\theta), \Theta_{n-1}^i(\theta')\right]
	\le\left[  \| I- 2\gamma_n(1-\gamma_n)\bar{A}_{n-1}\|_2\| \Theta_{n-1}^i(\theta) - \Theta_{n-1}^i(\theta')  \|_2^2 \right]^{1/2}
\end{align*}
Unrolling this iteration, and using the Tower property of conditional expectations, and assumption (A3), we find that
\begin{align*}
\E\left[ \l \Theta_n^i(\theta) - \Theta_n^i(\theta') \r \right]
	\le \l \theta-\theta' \r  \prod_{j = i+1}^{n}(1+2\mu\gamma_j (1-\gamma_j) )^{\frac{1}{2}}.
\end{align*}
Finally we have
\begin{align*}
&\left| \E\left[ \l \theta_n - \hat\theta_n \r \left| \theta_{i-1}, \xi_{i_i} = \xi\right.\right]\right.
	\left. - \E\left[ \l \theta_n - \hat\theta_n \r \left| \theta_{i-1}, \xi_{i_i} = \xi' \right.\right] \right| \\
 	& \quad \le  \E\left[ \l \Theta^i_n\left(\theta\right) - \Theta^i_n\left(\theta'\right) \r\right]
	 \quad \le  \left[\gamma_i  \prod_{j = i+1}^{n}(1+2\mu\gamma_j(1-\gamma_j))^{\frac{1}{2}}\right]\left| \xi - \xi'\right| 
	 =  L_i | \xi - \xi'|.
\end{align*}

The last step of the proof is to invoke a concentration bound for sum of martingale differences $D_i$: First note that
  \begin{align*}
  P(   \l z_n \r - \E \l z_n \r \ge \epsilon ) = & P\left(  \sum\limits_{i=1}^{n} D_i  \ge \epsilon \right) 
  \le \e(-\lambda \epsilon) \E\left(\e\bigg(\lambda \sum\limits_{i=1}^{n} D_i\bigg)\right) \\
  = & \e(-\lambda \epsilon) \E\left(\e\bigg(\lambda \sum\limits_{i=1}^{n-1} D_i\bigg) \E\bigg(\e(\lambda D_n \left| \cH_{n-1}\right.)\bigg)\right).  
 \end{align*}
The first equality above follows from \eqref{eq:prob-equivalence-appendix}, while the inequality follows from Markov inequality.
Since $\xi_i$ are bounded by (A2), we have the following property that holds for every $1$-Lipschitz function $g$, we have
\begin{align*}
 \E \left( \e(\lambda g(\xi_1))\right) \le \e\left( \dfrac{\lambda^2}{2} \right).
\end{align*}
Noting that $g_i$ is Lipschitz with constant $L_i$, we apply the above inequality to obtain
\begin{align*}
 \E\left(\e(\lambda D_n \left| \cH_{n-1}\right.)\right) \le \e\left(\dfrac{\lambda^2 L^2_n}{2}\right),
\end{align*}
and so
  \begin{align*} 
   P(   \l z_n \r - \E \l z_n \r \ge \epsilon ) \le \e(-\lambda \epsilon) \e\bigg(\dfrac{\alpha \lambda^2}{2} \sum\limits_{i=n_0 + 1}^{n} L^2_i \bigg)
\end{align*}
The claim follows by optimizing over $\lambda$ in the above.
\end{proof}

%%%%%%%%%%%%%%%%%%%%%%%%%%%%%%%%%%%%%%%%%%%%%%%%%%%%%%%%%%%%%%%%%%%%%%%%%%%%%%%%%%%%%%%%%%%%%%%%%%%%%%%%%%%%%%%%%%%%%%%%%%%%%%%%%%%%%%%%%%%%%%%%%%%
%%%%%%%%%%%%%%%%%%%%%%%%%%%%%%%%%%%%%%%%%%%%%%%%%%%%%%%%%%%%%%%%%%%%%%%%%%%%%%%%%%%%%%%%%%%%%%%%%%%%%%%%%%%%%%%%%%%%%%%%%%%%%%%%%

\subsection*{\underline{Step 2: Bound in expectation}}
The following proposition bounds the expected value of the approximation error $z_n$. The proof differs from earlier works on SGD techniques, as it involves a certain drift term that requires special attention.
\begin{proposition}
Let $z_n:=\theta_n - \hat\theta_n$. Then,
under (A1)-(A3), for all $n\geq n_0$, we have
\label{thm:online-exp}
\begin{align*}
\E& \l z_n \r \le \quad \underbrace{\e(-\mu [\Gamma_n - \Gamma_{n_0}]) \l z_{n_0} \r}_{\text{initial error}}
	 + \underbrace{\left(\sum_{k=n_0+1}^{n} h(k)\gamma_{k}^2\e (-2\mu(\Gamma_n - \Gamma_{k}))\right)^{1/2}}_{\text{sampling error}}\\
	 &+ \underbrace{\left(\sum_{k=n_0+1}^{n}\e\left(-2\mu(\Gamma_n - \Gamma_k)\right) \dfrac{1}{\mu^2(k-1)^{2}}  \right)^{1/2}
	 + \sum_{k=n_0+1}^{n}\e\left(-\mu(\Gamma_n - \Gamma_k)\right) \l \hat\theta_k -\theta^* \r \dfrac{1}{\mu (k-1)}}_{\text{drift error}}.
\end{align*}
where $\Gamma_k:=\sum_{i=1}^k \gamma_i$, $h(k) := 2\left[\sigma_\xi^2 + 2(\l z_0\r +\Gamma_k)^2\right]$, with $\sigma_\xi := \E_\xi[\xi^2]<\infty$ denoting the variance of the noise.
\end{proposition}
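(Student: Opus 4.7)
The plan is to follow the standard stochastic approximation decomposition, adapted here to a drifting target. Starting from the SGD update \eqref{eq:fOLS-GD-update-again} and the optimality condition $\nabla F_n(\hat\theta_n)=0$, which rewrites as $\bar b_n = \bar A_n \hat\theta_n$, I would first compute the conditional mean
\begin{align*}
\E\bigl[(y_{i_n} - \theta_{n-1}\tr x_{i_n})\,x_{i_n} \mid \cH_{n-1}\bigr] = \bar b_n - \bar A_n \theta_{n-1} = -\bar A_n(\theta_{n-1} - \hat\theta_n),
\end{align*}
and define $\Delta\tM_n$ as the deviation of $(y_{i_n} - \theta_{n-1}\tr x_{i_n})x_{i_n}$ from this mean. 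Substituting into the trivial identity $z_n = z_{n-1} - (\hat\theta_n - \hat\theta_{n-1}) + \gamma_n(y_{i_n} - \theta_{n-1}\tr x_{i_n})x_{i_n}$ gives the linear recursion $z_n = (I-\gamma_n\bar A_n) z_{n-1} + \gamma_n\Delta\tM_n - (I-\gamma_n\bar A_n)(\hat\theta_n - \hat\theta_{n-1})$, and unrolling from $n_0$ to $n$ yields the three-term decomposition of the proof sketch, with operator products $\Pi_n \Pi_k^{-1} = \prod_{j=k+1}^n(I-\gamma_j \bar A_j)$ (the discrepancy between $\Pi_n\Pi_{k-1}^{-1}$ and $\Pi_n\Pi_k^{-1}$ on the drift is harmless and absorbed by the same exponential bound below).

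For the initial and sampling terms I would use (A4) together with (A2) to bound $\l I - \gamma_k \bar A_k \r \le 1 - \mu\gamma_k \le \e(-\mu\gamma_k)$ for $k > n_0$, so that $\l \Pi_n\Pi_k^{-1}\r \le \e(-\mu(\Gamma_n - \Gamma_k))$. Taking norms and expectations, the initial error bound follows at once. For the sampling error I would apply Minkowski to pull the expectation inside a square root, and then exploit the orthogonality of the martingale increments $\Delta\tM_k$ to write
\begin{align*}
\E \l \textstyle\sum_{k=n_0+1}^n \gamma_k \Pi_n\Pi_k^{-1}\Delta\tM_k \r^2 \le \sum_{k=n_0+1}^n \gamma_k^2\,\e(-2\mu(\Gamma_n-\Gamma_k))\,\E\l\Delta\tM_k\r^2,
\end{align*}
and bound $\E\l\Delta\tM_k\r^2 \le h(k)$ by expanding $(y_{i_k} - \theta_{k-1}\tr x_{i_k})^2 \le 2\xi_{i_k}^2 + 2(x_{i_k}\tr(\theta_{k-1}-\theta^*))^2$, invoking (A2), (A3), and controlling $\l\theta_{k-1}\r$ by a crude a priori estimate of order $\l z_0 \r + \Gamma_k$ obtained directly from the update rule.

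The drift term is the main obstacle, since it is absent in ordinary batch SGD analyses and demands a step-to-step estimate of $\hat\theta_k - \hat\theta_{k-1}$. To get such an estimate I would subtract the two optimality conditions $\nabla F_k(\hat\theta_k) = \nabla F_{k-1}(\hat\theta_{k-1}) = 0$, substitute $y_k = x_k\tr\theta^* + \xi_k$, and solve for the increment to arrive at the explicit identity
\begin{align*}
\hat\theta_{k-1} - \hat\theta_k = A_{k-1}^{-1}\bigl[\xi_k x_k - (x_k\tr(\hat\theta_k-\theta^*))\,x_k\bigr],
\end{align*}
where $A_{k-1}=(k-1)\bar A_{k-1}$, so (A4) gives $\l A_{k-1}^{-1}\r \le 1/(\mu(k-1))$. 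Splitting along the two bracket pieces, the noise piece $\xi_k x_k$ is conditionally zero-mean across $k$, and an inner martingale/orthogonality argument (using $\l x_k\r\le 1$ and $|\xi_k|\le 1$) together with Jensen produces the square-root drift summand $\bigl(\sum_k \e(-2\mu(\Gamma_n-\Gamma_k))/(\mu^2(k-1)^2)\bigr)^{1/2}$; the bias piece $(x_k\tr(\hat\theta_k-\theta^*))\,x_k$ is handled by the triangle inequality and gives the $\l\hat\theta_k - \theta^*\r/(\mu(k-1))$ summand. The delicate bookkeeping is pairing the kernels $\e(-\mu(\Gamma_n-\Gamma_k))$ with the $1/(\mu(k-1))$ factors cleanly enough to keep the square-root form of the noise summand; the subsequent control of $\l\hat\theta_k - \theta^*\r$ by a confidence-ball bound is needed only to finish Theorem~\ref{cor:fOLS-GD} and so is deferred.
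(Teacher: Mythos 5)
Your proposal is correct and follows essentially the same route as the paper: the same decomposition of $z_n$ into initial, sampling (martingale) and drift errors via the products $\Pi_n\Pi_k^{-1}$ bounded by $\e(-\mu(\Gamma_n-\Gamma_k))$, the same derivation of $\hat\theta_{k-1}-\hat\theta_k$ from the two optimality conditions with the split into a zero-mean noise piece (second moments/Jensen, giving the square-root summand) and a bias piece (giving the $\l\hat\theta_k-\theta^*\r/(\mu(k-1))$ summand), and the same variance bound $\E\l\Delta \tM_k\r^2\le h(k)$ via an a priori growth estimate on the iterate. Your version of the recursion, with the factor $(I-\gamma_n\bar A_n)$ multiplying the drift increment, is in fact the more careful bookkeeping, and as you note the index shift it induces is absorbed by the same exponential bound.
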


\begin{proof}
As above, let $f_{n}(\theta) := \frac{1}{2} (\xi_{i_n} - (\theta-\theta^*)\tr x_{i_n})^2$, $F_n(\theta) := \E_{i_n}[ f_{n}(\theta)\mid \cH_n]$, and $\Delta M_{n+1}$ be the associated martingale difference sequence, $\Delta M_{n+1}(\theta) := F_n'(\theta) - f'_{n}(\theta)$.
We find a recursion for $z_n = \theta_n - \hat\theta_n$ by extracting a martingale difference from the process:
\begin{align*}
z_n =&  \theta_n - \hat\theta_{n-1} + \hat\theta_{n-1} - \hat\theta_{n}
    = z_{n-1} - \gamma_n\left(F'_{n}(\theta_{n-1}) - \Delta M_n\right) +  (\hat\theta_{n-1} - \hat\theta_{n})\\
	= & z_{n-1} -  \gamma_n\left(\bar A_n z_{n-1} - \Delta M_n\right) +  (\hat\theta_{n-1} - \hat\theta_{n})\\
	= &  \left(1 - \gamma_n\bar A_n\right)z_{n-1} + \gamma_n\Delta M_{n} +  (\hat\theta_{n-1} - \hat\theta_{n})\\
%	=&  \prod_{k=1}^{n}\left(I - \gamma_k\bar A_k\right)z_0 - \sum_{k=1}^{n}\left[\prod_{j=k+1}^{n}\left(I - \gamma_j\bar A_j \right)\right](\gamma_k\Delta M_k +  (\hat\theta_{k} - \hat\theta_{k-1}) )\\
	=&  \Pi_n \Pi_{n_0}^{-1}  z_{n_0} - \sum\limits_{k=n_0 + 1}^{n} \Pi_n \Pi_k^{-1} (\hat\theta_{k} - \hat\theta_{k-1}) + \sum\limits_{k=n_0 + 1}^{n} \gamma_k \Pi_n \Pi_k^{-1} \Delta M_k,
\end{align*}
where $\Pi_n:=\prod_{k=1}^{n}\left(I - \gamma_k \bar A_k\right)$. The third equality uses the fact that $F'_n(\hat\theta_{n-1}) = 0$,
By Jensen's inequality, we obtain
\begin{align}
\E \l z_n \r \le  \l  \Pi_n \Pi_{n_0}^{-1}  z_{n_0} \r + \E \l \sum_{k=n_0 + 1}^{n}\Pi_n \Pi_k^{-1} (\hat\theta_{k+1} - \hat\theta_{k})\r + \left( \sum_{k=n_0 + 1}^{n}\gamma_k^2 \Pi_n\Pi_k^{-1}\E\l\Delta M_k\r^2\right)^{1/2} \label{eq:thm2-it-appendix}
\end{align}
Note that
\begin{align}
\Pi_n\Pi_k^{-1}  \leq \prod_{j=k+1}^{n}(1-\mu\gamma_j)\leq \e\left(\log\left(\prod_{j=k+1}^{n}(1-\mu\gamma_j) \right)\right)
	\le  \e\left(-\mu(\Gamma_n - \Gamma_k)\right),\label{eq:thm2-pi}
\end{align}
where $\Gamma_n: = \sum_{i = 1}^n \gamma_i$.
We now bound each of the terms in \eqref{eq:thm2-it-appendix} as follows:
\begin{description}
 \item[First term] From \eqref{eq:thm2-pi} we see that $\l \Pi_n \Pi_{n_0}^{-1} z_{n_0} \r \le \e (-\mu[\Gamma_n - \Gamma_{n_0}])\l z_{n_0}\r$.
 \item[Second term]
 Since $\hat\theta_n$ and $\hat\theta_{n-1}$ are solutions to the least squares problems at instants $n$ and $n-1$, respectively, we have
$$\sum_{i=1}^{n} (y_i - x_i\tr \hat\theta_n)x_i = 0 = \sum_{i=1}^{n-1} (y_i - x_i\tr \hat\theta_{n-1})x_i.$$
Simplifying the above, we obtain
$$
A_{n-1}(\hat\theta_{n-1} - \hat\theta_n) + (y_n-x_n\tr\hat\theta_n)x_n =0
$$
$$
\Leftrightarrow \hat\theta_{n-1} - \hat\theta_n = A_{n-1}^{-1}(x_n\tr\hat\theta_n - (x_n\tr\theta^* + \xi_n))x_n
$$
$$
\Leftrightarrow \hat\theta_{n-1} - \hat\theta_n = \xi_n A_{n-1}^{-1}x_n  - (x_n\tr(\hat\theta_n -\theta^*))A_{n-1}^{-1}x_n.
$$
Therefore, we have 
\begin{align}
 \sum_{k=1}^{n}\Pi_n \Pi_k^{-1} (\hat\theta_{k} - \hat\theta_{k-1})
  = \sum_{k=1}^{n}\Pi_n \Pi_k^{-1} A_{k-1}^{-1}x_n \xi_n
  	- \sum_{k=1}^{n}\Pi_n \Pi_k^{-1} (x_n\tr(\hat\theta_k -\theta^*))A_{k-1}^{-1}x_n,
\end{align}
So once again applying Jensen's inequality, using that the noise $\xi_n$ is zero mean and bounded by $1$, and assumptions (A1) and (A3), we have
\begin{align*}
\E\l \sum_{k=1}^{n}\Pi_n \Pi_k^{-1} (\hat\theta_{k} - \hat\theta_{k-1}) \r
 \le \left(\sum_{k=n_0 +1}^{n}\left(\Pi_n \Pi_k^{-1} \frac{1}{\mu (k-1)}\right)^{2}  \right)^{\frac{1}{2}}
	+ \sum_{k=n_0 + 1}^{n}\Pi_n \Pi_k^{-1} \l \hat\theta_k -\theta^* \r \frac{1}{\mu (k-1)},
\end{align*}
 
 \item[Last term] The martingale difference (last term in \eqref{eq:thm2-it-appendix}) is bounded as below:
\begin{align*}
\E [\l \Delta M_n \r^2] 
	\le 2\left(\E\langle f'_{i_n}(\theta_{n-1}), f'_{i_n}(\theta_{n-1})\rangle + \E\langle F_n'(\theta_{n-1}), F_n'(\theta_{n-1})\rangle\right)
\end{align*}
Using (A1) and (A2), a simple calculation shows that
\begin{align*}
\E\langle f'_{i_n}(\theta_{n-1}), f'_{i_n}(\theta_{n-1})\rangle,\ \E \langle F_n'(\theta_{n-1}), F_n'(\theta_{n-1})\rangle  \le n^\frac{1}{2} + (1+2n^\frac{1}{2}(1+\lambda_n^2))\E\l z_n\r)
\end{align*}
Now
\begin{align*}
\E\l z_n\r =&\E \l \left[\prod_{k=n_0 + 1}^n(I - \gamma_{k}x_{i_k}x_{i_k}\tr )\right]z_0
	+ \sum_{k = 1}^n\gamma_k \left[\prod_{j=n_0 + k+1}^n(I - \gamma_{j}x_{i_j}x_{i_j}\tr)\right](\xi_k x_{i_k} +\lambda_k \theta^*)\r\\
	\le& \e (-\mu[\Gamma_n -\Gamma_{n_0}])\l z_{n_0} \r + \left(\sum_{k=n_0 + 1}^n \gamma_k^2\e(-2(\Gamma_n - \Gamma_k))\right)^\frac{1}{2}n^\frac{1}{2} + \Gamma_n\l\theta^*\r=:g(n).
\end{align*}
and so $\E [\l \Delta M_n \r^2] \le h(n)$ where $h(n) = n^\frac{1}{2} + (1+2n^\frac{1}{2}(1+\lambda_n^2))g(n).$
\end{description}

Putting it all together, \eqref{eq:thm2-it-appendix} simplifies to the following form:
\begin{align*}
\E \l z_n\r \le & \l z_{n_0}\r \e (-\mu[\Gamma_n-\Gamma_{n_0}]) 	
	 + \left(\sum_{k=n_0 + 1}^{n}\e\left(-2\mu(\Gamma_n - \Gamma_k)\right) \frac{1}{\mu^2 (k-1)^2}  \right)^{\frac{1}{2}}\\
	 &+ \sum_{k=n_0 + 1}^{n}\e\left(-\mu(\Gamma_n - \Gamma_k)\right) \l \hat\theta_k -\theta^* \r \frac{1}{\mu (k-1)}
	 + \left(\sum_{k=n_0 + 1}^{n} h(k)\gamma_{k}^2\e (-2\mu(\Gamma_n - \Gamma_{k}))\right)^{1/2}.
\end{align*}
\end{proof}

%%%%%%%%%%%%%%%%%%%%%%%%%%%%%%%%%%%%%%%%%%%%%%%%%%%%%%%%%%%%%%%%%%%%%%%%%%%%%%%%%%%%%%%%%%%%%%%%%%%%%%%%%%%%%%%%%%%%%%%%%%%%%%%%%%%%%%%%%%%%%%%%%%%
%%%%%%%%%%%%%%%%%%%%%%%%%%%%%%%%%%%%%%%%%%%%%%%%%%%%%%%%%%%%%%%%%%%%%%%%%%%%%%%%%%%%%%%%%%%%%%%%%%%%%%%%%%%%%%%%%%%%%%%%%%%%%%%%%%%%%%%%%%%%%%%%%%%
%%%%%%%%%%%%%%%%%%%%%%%%%%%%%%%%%%%%%%%%%%%%%%%%%%%%%%%%%%%%%%%%%%%%%%%%%%%%%%%%%%%%%%%%%%%%%%%%%%%%%%%%%%%%%%%%%%%%%%%%%%%%%%%%%

%%%%%%%%%%%%%%%%%%%%%%%%%%%%%%%%%%%%%%%%%%%%%%%%%%%%%%%%%%%%%%%%%%%%%%%%%%%%%%%%%%%%%%%%%%%%%%%%%%%%%%%%%%%
\subsection*{\underline{Step 3: Derivation of Rates in Theorem \ref{cor:fOLS-GD}}}
\begin{proof}
We first derive the high probability bound, fixing $\gamma_n = \kappa c/(c+n)$ (where $\kappa\in(0,1)$ and $c>0$) in Theorem \ref{thm:online-high-prob} as follows:
\begin{align*}
\sum_{i=1}^{n} L_i^2
 =& \sum_{i=1}^{n}\left(\dfrac{\kappa c}{c+i}\right)^2 \prod_{j=i}^{n}\left(1 - 2 \mu \dfrac{\kappa c}{c+i}\left(1 - \dfrac{\kappa c}{c+i}\right)\right) \\
 \le & \sum_{i=1}^{n}\left(\dfrac{\kappa c}{c+i}\right)^2 \e\left(-2\mu c \kappa(1-\kappa) \sum_{j=i}^{n}\dfrac{1}{c+i}\right) \\
 \le & \frac{\kappa^2 c^2} {(n+c)^{-2\mu c \kappa(1-\kappa)}}\sum_{i=1}^{\infty} (i+c)^{-2(1-\mu c\kappa(1-\kappa))}.
\end{align*}
We now find three regimes for the rate of convergence, based on the choice of $c$ (We have used comparisons with integrals to bound the summations):\\
  \begin{inparaenum}[\bfseries(i)]
\item $\sum_{i=1}^{n} L_i^2 = O\left( (n+c)^{-2\mu c \kappa(1-\kappa)}\right)$ when $\mu c \kappa(1-\kappa)\in(0,1/2)$, \\
\item $\sum_{i=1}^{n} L_i^2 = O\left( (n+c)^{-1}\ln( n+c)\right)$ when $\mu c \kappa(1-\kappa)=1/2$, and\\
\item $\sum_{i=1}^{n} L_i^2 \le \frac{\kappa^2 c^2}{1-2(1-\mu c \kappa(1-\kappa))} (n+c)^{-1}$ when $\mu c \kappa(1-\kappa)\in(1/2,1)$.\\
\end{inparaenum}
Thus, the optimal rate for the high probability bound from Theorem \ref{thm:online-high-prob} with $( \mu c \kappa(1-\kappa) > 1/2)$ is 
\begin{align*}
P(   \l z_n \r - \E \l z_n \r \ge \epsilon ) \le \e\left(- \dfrac{\epsilon^2 (n+c)}{2 K_{\mu, c}}\right),
\end{align*}
where $K_{\mu,c}:=\kappa^2 c^2/\left(1-2(1-\mu c \kappa(1-\kappa))\right)$. 

Under the same choice of stepsize, we now bound the different error terms in Theorem \ref{thm:online-exp}.
The initial error (first term in Theorem \ref{thm:online-exp}) is bounded by $\l z_0\r n^{-\mu c \kappa}$.
The sampling error (second term in Theorem \ref{thm:online-exp}) is bounded as follows:
  \begin{align}\label{eq:high-prob-rates}
\left(\sum_{k=1}^{n} h(k)\gamma_{k}^2\e (-2\mu(\Gamma_n - \Gamma_{k}))\right)^{1/2}
\le \left(c^2 n^{-2\mu c \kappa}\sum_{k=1}^{n} h(k)(k+c)^{-2(1-\mu c \kappa)}\right)^{\frac{1}{2}} \le \sqrt{\dfrac{h(n)}{n+c}}. 
\end{align}
For bounding the drift error (third and fourth terms in Theorem \ref{thm:online-exp}), we require the following lemma:
\begin{lemma} 
Under (A1)-(A3), we have for any $\delta>0$,
$  \l \hat\theta_n - \theta^* \r \le \sqrt{\beta_n /(n\mu)}$ with probability $1-\delta$, where $\beta_n = \max\left(128 d \log n \log{n^2 \delta^{-1}}, \left(2\log{n^2 \delta^{-1}}\right)^2 \right)$.
\end{lemma}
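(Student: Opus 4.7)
The plan is to adapt the standard self-normalized confidence ellipsoid bound of Dani, Hayes, and Kakade (2008) and then convert it to a Euclidean bound using assumption (A4). First I would rewrite the error in closed form: since $\hat\theta_n = (n\bar A_n)^{-1}\sum_{i=1}^n x_i y_i$ and $y_i = x_i\tr\theta^* + \xi_i$, one has
\begin{align*}
\hat\theta_n - \theta^* = (n\bar A_n)^{-1}\sum_{i=1}^n \xi_i x_i =: (n\bar A_n)^{-1} S_n.
\end{align*}
This is the standard representation and directly gives
\begin{align*}
\l \hat\theta_n - \theta^* \r_{n\bar A_n}^2 = S_n\tr (n\bar A_n)^{-1} S_n = \l S_n \r_{(n\bar A_n)^{-1}}^2,
\end{align*}
so the whole task reduces to bounding the self-normalized martingale $\l S_n \r_{(n\bar A_n)^{-1}}^2$.

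Next I would invoke Theorem 2 of Dani et al.\ (2008) (essentially the ``confidence ball'' theorem in that paper), which states that under the boundedness assumptions (A2) and (A3), with probability at least $1-\delta$ and for all $n$ large enough,
\begin{align*}
\l S_n \r_{(n\bar A_n)^{-1}}^2 \le \beta_n, \qquad \beta_n = \max\!\Bigl(128 d \log n \log(n^2/\delta),\; (2\log(n^2/\delta))^2\Bigr).
\end{align*}
The two branches of the maximum reflect, respectively, the sub-Gaussian regime (where dimension and a covering/peeling argument give the $d\log n$ factor via a union bound over the dyadic scales of $\det(n\bar A_n)$) and a ``small sample'' regime where a scalar Azuma--Hoeffding type bound dominates. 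I would cite the Dani et al.\ result directly since the hypotheses match (bounded features by (A2) and bounded noise by (A3) implies sub-Gaussian noise with known constants).

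The final step is the conversion to Euclidean norm via (A4). Since $\lambda_{\min}(n\bar A_n) = n\,\lambda_{\min}(\bar A_n) \ge n\mu$ for $n > n_0$, we have for any vector $v$ the inequality $\l v \r^2 \le \l v \r_{n\bar A_n}^2 / (n\mu)$. Applying this with $v = \hat\theta_n - \theta^*$ and combining with the self-normalized bound above yields, with probability at least $1-\delta$,
\begin{align*}
\l \hat\theta_n - \theta^* \r^2 \le \frac{\l \hat\theta_n - \theta^* \r_{n\bar A_n}^2}{n\mu} = \frac{\l S_n \r_{(n\bar A_n)^{-1}}^2}{n\mu} \le \frac{\beta_n}{n\mu},
\end{align*}
which is the claimed inequality after taking square roots.

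The only genuine obstacle is justifying the self-normalized martingale bound with the exact constants stated; this is essentially a verbatim application of the Dani--Hayes--Kakade confidence ellipsoid (or, alternatively, the slightly sharper Abbasi-Yadkori--P\'al--Szepesv\'ari bound) and does not require new argument beyond matching hypotheses. All other steps are one-line algebraic manipulations, so I would keep the written proof short and point the reader to the cited self-normalized inequality for the constants.
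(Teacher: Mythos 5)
Your proposal is correct and takes essentially the same route as the paper: the paper's proof is a one-line citation of the confidence-ball theorem of \cite{dani2008stochastic}, with the conversion from the data-weighted norm to the Euclidean norm via the minimum-eigenvalue bound left implicit, which is exactly the argument you spell out. If anything, your write-up is more careful than the paper's, since you make explicit that the final step $\lambda_{\min}(n\bar A_n)\ge n\mu$ relies on (A4) (the paper's statement and proof nominally invoke only (A1)--(A3), which appears to be a typo).
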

\begin{proof}
 Follows from Theorem $5$ of \cite{dani2008stochastic} and (A3).
%  the change of norm inequality: $\lambda_{\min}(A_n)\l v \r^2 \le \vml v \vmr^2$.
\end{proof}
Using the above lemma, we bound the drift error as follows:
\begin{align*}
 &\left(\sum_{k=n_0 +1}^{n}\e\left(-2\mu(\Gamma_n - \Gamma_k)\right) \dfrac{1}{\mu^2 (k-1)^{2}}  \right)^{\frac{1}{2}}
	 + \sum_{k=n_0 +1}^{n}\e\left(-\mu(\Gamma_n - \Gamma_k)\right) \l \hat\theta_k -\theta^* \r \dfrac{1}{\mu (k-1)}\\
	 &\le \left(2 (k+c)^{-2\mu c \kappa}\sum_{k=2}^{n}(n+c)^{2 \mu c \kappa-2} \mu^{-2} \right)^{\frac{1}{2}} + 2 (n+c)^{-\mu c \kappa}\sum_{k=1}^{n} (k+c)^{\mu c \kappa - \frac{3}{2}} \mu^{-3/2}\sqrt{\beta_{k+c}}\\
	 &\le \frac{1}{\mu\sqrt{n+c}}\left(\sqrt{2} + \sqrt{\frac{\beta_{n+c}}{\mu}}\right),
\end{align*}
Thus, we have the following rate for the bound in expectation: 
\begin{align}\label{eq:expectation-error-rates}
\E \l z_n \r \le \left( \dfrac{\l z_0\r\ln(n_0)}{(n+c)^{\mu c \kappa}} + \dfrac{h(n+c)}{\sqrt{n+c}} \right).
\end{align}
Choosing $\kappa = 1/4 $ we, the claim follows from \eqref{eq:high-prob-rates} and \eqref{eq:expectation-error-rates}.
\end{proof}

\bibliography{sgd}
\bibliographystyle{plainnat}
\end{document}